\newtheorem{theorem}{Theorem}[]
\newtheorem{corollary}{Corollary}[]
\newtheorem{definition}{Definition}[]
\begin{document}

\title{Variational Graph Generator for Multi-View Graph Clustering}

\author{
    Jianpeng~Chen,
    Yawen~Ling,
    Jie~Xu,
    Yazhou~Ren,~\IEEEmembership{Member,~IEEE,}
    Shudong~Huang,
    Xiaorong~Pu,
    Zhifeng~Hao,~\IEEEmembership{Senior Member,~IEEE,}
    Philip~S.~Yu,~\IEEEmembership{Fellow,~IEEE,}
    Lifang~He,~\IEEEmembership{Senior Member,~IEEE}

\thanks{(Corresponding author: Yazhou Ren)}
\thanks{Jianpeng Chen was with the School of Computer Science and Engineering, University of Electronic Science and Technology of China, Chengdu 611731, China. He is now with the Department of Computer Science, Virginia Tech, Blacksburg, VA 24061, USA (e-mail: jianpengc@vt.edu).}
\thanks{Yawen Ling, Jie Xu are with the School of Computer Science and Engineering, University of Electronic Science and Technology of China, Chengdu 611731, China (e-mail: \{yawen.Ling, jiexuwork\}@outlook.com).}
\thanks{Yazhou Ren and Xiaorong Pu are with the School of Computer Science and Engineering, University of Electronic Science and Technology of China, Chengdu 611731, China, and also with the Shenzhen Institute for Advanced
 Study, University of Electronic Science and Technology of China, Shenzhen 518000, China (e-mail: \{yazhou.ren, puxiaor\}@uestc.edu.cn).}
\thanks{Shudong Huang is with the College of Computer Science, Sichuan University, Chengdu 610065, China (e-mail: huangsd@scu.edu.cn).}
\thanks{Zhifeng Hao is with the Department of Mathematics, the College of Science, Shantou University, Shantou 515063, China (e-mail: haozhifeng@stu.edu.cn).}
\thanks{Philip S. Yu is with the Department of Computer Science, University of Illinois Chicago, Chicago, IL 60607, USA (e-mail: psyu@cs.uic.edu).}
\thanks{Lifang He is with the Department of Computer Science and Engineering, Lehigh University, PA 18015, USA (e-mail: lih319@lehigh.edu).}
}

\markboth{IEEE Transactions on Neural Networks and Learning Systems, VOL. XX, NO. X, DECEMBER 2022}%
{Shell \MakeLowercase{\textit{et al.}}: A Sample Article Using IEEEtran.cls for IEEE Journals}

\IEEEpubid{0000--0000/00\$00.00~\copyright~2021 IEEE}

\maketitle

\begin{abstract}
  Multi-view graph clustering (MGC) methods are increasingly being studied due to the explosion of multi-view data with graph structural information. The critical point of MGC is to better utilize view-specific and view-common information in features and graphs of multiple views. However, existing works have an inherent limitation that they are unable to concurrently utilize the consensus graph information across multiple graphs and the view-specific feature information. To address this issue, we propose Variational Graph Generator for Multi-View Graph Clustering (VGMGC). Specifically, a novel variational graph generator is proposed to extract common information among multiple graphs. This generator infers a reliable variational consensus graph based on a priori assumption over multiple graphs. Then a simple yet effective graph encoder in conjunction with the multi-view clustering objective is presented to learn the desired graph embeddings for clustering, which embeds the inferred view-common graph and view-specific graphs together with features. Finally, theoretical results illustrate the rationality of the VGMGC by analyzing the uncertainty of the inferred consensus graph with the information bottleneck principle. 
  Extensive experiments demonstrate the superior performance of our VGMGC over SOTAs. The source code is publicly available at \url{https://github.com/cjpcool/VGMGC}.
\end{abstract}
\begin{IEEEkeywords}
Multi-view graph clustering, graph generator, graph learning, variational inference, information bottleneck.
\end{IEEEkeywords}

\section{Introduction}
\label{secIntro}
\IEEEPARstart{A}{ttributed} graph clustering has received increasing attention with the development of graph structure data, such as social networks, academic networks, and world wide web~\cite{AGC,DGI,SGC}. Attributed graph clustering aims to better analyze the graphs with its attributed features for clustering task, which can benefit lots of fields such as product recommendation~\cite{Kakarash2022ATA} and community detection~\cite{9511798}. On the other hand, many graph structure data contain multiple views in real-world applications. For example, graphs in social networks can be constructed based on common interests or followers of users, and graphs in academic networks can be established by co-authors or citations of papers. Therefore, lots of multi-view graph clustering (MGC) methods have emerged in recent years for better mining the information of multi-view graphs~\cite{MAGCN, o2multi,MVGCIC,9842356}.

\IEEEpubidadjcol
\begin{figure}[t]
    \centering
    \includegraphics[width=0.9\linewidth]{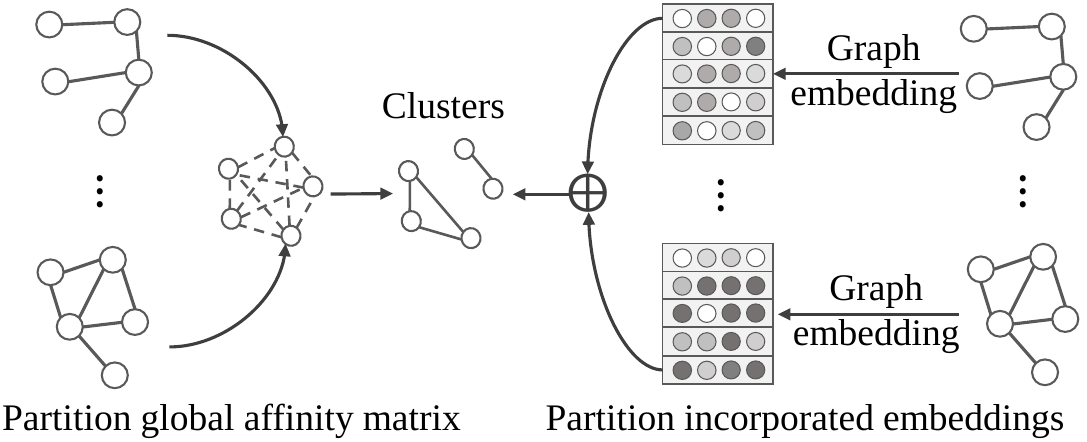}
    \caption{Two types of multi-view graph clustering. $\oplus$ denotes any combination operation (\emph{e.g.}, concatenation). One type tries to capture consensus topological information, but view-specific information in features is ignored (left); another type embeds view-specific graphs at first, then learns a global embedding cooperatively, but global topological information is lost (right).}
    \label{fig:2categories}
\end{figure}

Generally, MGC methods could be categorized into two types~\cite{o2multi,nie2017self,MVCGR,9509355} as shown in Figure~\ref{fig:2categories}. One type generates a global affinity matrix of all samples for all views and then groups them into different clusters~\cite{nie2017self,xia14RMSC,GMC,RGIMVC}. These methods successfully demonstrate the effectiveness of learning a consensus graph for analyzing all views. Another type learns representations for individual views via graph embedding techniques and then incorporates them into global representations. Subsequently, traditional clustering methods, such as $k$-means~\cite{kmeans}, can be adopted to obtain clustering results~\cite{MAGCN,R2FGC,9516695}. Benefitting from the graph embedding ability of Graph Neural Networks (GNNs)~\cite{SGC,GCN,GAT,S2GC}, the two types of methods have been greatly improved~\cite{MAGCN,o2multi,9516695,pan2021multi,lin2021graph}. 
Nevertheless, both studies have inherent limitations. For the first type of work, the consensus graph reflects the common topological information of all views. But if we directly partition the consensus graph, the latent view-specific information in features will be ignored. As for the second type of work, each view's graph embeddings are learned separately, so the global topological information might be ignored. Therefore, it is desirable to develop a method that is able to assimilate both the view-common and view-specific topological information, and in the meantime, embed the feature information by utilizing the superior graph embedding ability of GNNs.


To achieve the above goal, we will have two major challenges: 1) how to explore the consensus information to generate a consensus graph that is applicable to GNNs, and 2) how to design a GNN that could embed the view-specific information and the generated consensus graph into features together.
\textbf{C1. Consensus exploration for graph generation:}
Considering the first challenge, the generated consensus graph requires some properties. First, it must imply the \emph{common information} from both features and graphs. Second, if there is an observed graph that is unreliable, it should depend less on this unreliable graph, in other words, different graphs have different probabilities to be employed for generating the consensus graph, requiring us to infer a consensus graph \emph{with a specific probability}.  Third, it should be \emph{sparse} to avoid excessive storage and computational consumption, which can simultaneously enhance robustness.~\cite{robustSparse,2020ChriskWL}. 
Based on these concerns, the technique of variational inference is a perfect match, which could infer sparse and discrete variables with a specific probability. More importantly, with our novel design, the proposed variational graph generator can extract the common information among multiple graphs and global features, as well as some task-relevant information (see Section~\ref{secVGG}).
\textbf{C2. GNN designing for consensus and specific information fusion:} Regarding the second challenge, we design a simple yet effective encoder with a parameter-free message-passing method to embed the view-specific and view-common graph information together (see Section~\ref{secGraphEnc}). The design of this graph encoder comes mainly from previous related research on GNNs~\cite{SGC,2020ChriskWL,GIN} which has suggested that many parameters in GNNs are redundant, and the powerful representation ability of GNNs comes from the message-passing process.
In addition, we introduce our multi-view fusion method to incorporate graph embeddings from all views (see Section~\ref{SecMVFusion}). 
For brevity, we call the proposed framework VGMGC (Variational Graph Generator for Multi-View Graph Clustering).

Furthermore, we have theoretically analyzed the rationality of the generated graph (see Section~\ref{secTheoAna}), and empirically demonstrated the effectiveness of VGMGC (see Section~\ref{secExp}).
In summary, the main contributions of this work include:
\begin{itemize}
    \item \textbf{Novel variational graph generator:} We propose a novel variational graph generator based on a priori assumption over multi-graphs to produce a variational consensus graph with desired properties for GNNs. To the best of our knowledge, this work is the first trial utilizing variational inference to infer a discrete graph for multi-view graph learning. The use of variational inference for inferring discrete graphs in the context of multi-view graph learning is a significant advancement.
    \item \textbf{Efficient parameter-free message-passing encoder:} We present an efficient encoder that embeds the consensus graph and view-specific graphs into embeddings in a parameter-free message-passing manner. The design of a parameter-free encoder advances the computational efficiency and scalability of multi-graph processing.
    \item \textbf{Theoretical and empirical validation:} Theoretical analysis illustrates the rationality of the inferred consensus graph and our proposals, providing a reference for research about graph information bottleneck. Experiments on eight diverse datasets demonstrate the superior performance of our method compared to SOTAs.
\end{itemize}

\section{Related Works}
\subsection{Graph Neural Networks for Multi-View Graph Clustering}
\citet{Ren2022DeepCA} reviewed some methods for deep multi-view clustering, such as dual self-paced multi-view clustering~\cite{HUANG2021184}, self-supervised discriminative feature learning for deep multi-view clustering~\cite{SDMVC,huang2023self,huang2021non,xu2023untie}, and unsupervised deep embedding for cluster analysis~\cite{DEC,BMSC}, but the issues in multi-view graph clustering still remain to be explored.
\citet{GIN} has theoretically proved that GNNs can efficiently embed graph structural information with features. Promoted by the success of GNNs, several multi-view clustering methods have emerged. Specifically, \citet{o2multi} first employed GNN~\cite{GCN} for MGC. It selects an informative graph with features and conducts GNN to generate embeddings for clustering and multi-view information is injected by backpropagation. However, the information implied in the unselected graphs is largely lost in this way. \citet{MAGCN} adopted GNN for multiple features with a shared graph, but it cannot be generalized to data with multiple graphs. In contrast, \citet{SAMGC} designed a GNN for view-shared features with multiple graphs named SAMGC. Alternatively, researchers in~\cite{pan2021multi} and~\cite{lin2021graph} embedded the graph-structured information in each view separately through the graph filter and then learned a consensus graph for spectral clustering. However, due to the fact that their clustering results are only determined by the generated graph, some useful information within features is ignored.

Most recently, some innovative approaches in multi-view graph clustering have emerged. Notably, the dual label-guided graph refinement for multi-view graph clustering (DuaLGR)~\cite{dulgr} and the method for reconstructing structure in graph-agnostic clustering (DGCN)~\cite{pan23b} have been introduced, specifically targeting heterophilous graphs in multi-view data scenarios. Following the two works, \citet{AGHGC} proposed an adaptive hybrid graph filter (AGHGC) for heterophilous graph multi-view clustering. \citet{R2FGC} have made significant strides by focusing on the redundancy of edges within graphs, leading to the development of a relational redundancy-free graph learning method. Additionally, \citet{SDSNE} proposed the use of stationary diffusion as a novel technique to acquire a consensus graph. Similarly \citet{pgsc} advanced the field by devising a method to obtain the consensus graph through the identification of optimal neighbors. However, these two methodologies fundamentally adhere to the first type mentioned before. Consequently, they are subject to similar inherent limitations, where the clustering results are only determined by the consensus graph, thus leading to a notable loss of feature information.

In contrast, the proposed VGMGC could reserve useful feature information and utilize the powerful graph embedding ability of GNNs to assimilate both the view-consensus and view-specific graph information for clustering tasks.

\subsection{Variational Inference for Multi-View Graph Clustering}
\citet{vae} proposed variational autoencoders (VAE), which have subsequently influenced many fields. For example, \citet{vae4clust} and \citet{cui2024novel} introduced VAE for clustering task. \citet{multiVAE} disentangled view-specific and view-common information via VAE for multi-view clustering task. In the field of graph clustering, \citet{GAE} first adopted this technique to graph representation learning for link prediction. Following this work, \citet{ARVGAE} proposed a variational adversarial approach to learn a robust graph embedding. \citet{TGNN} extended graph VAE to infer stochastic node representations. \citet{D-VGAE} proposed D-VGAE for link prediction by considering node similarity and node popularity. However, all these works aim at inferring the node representations rather than directly inferring a graph structure.

It seems hard to utilize variation to infer a graph since the observed input graph is certain and the priori distribution of the input graph is unknown. Even so, a few works have tried to infer a graph. \citet{pgexplainer} introduced a prior to graph, and sampled a new graph from observed one via uniform distribution. However, this graph is not essentially a generated one, but only a subgraph of the observed graph. \citet{advercialGCN} introduced two hyper-parameters to control the degree of uncertainty of a graph and generated a graph based on variational inference and $k$-nearest neighbors (kNN), but it also has two potential issues. First, the selection of the two hyper-parameters is groundless. Second, its variational ability is greatly restricted by kNN, and the $k$ in kNN is difficult to set.  In the most recent years, some works explored the graph generation ability of diffusion models for high-quality discrete graph generation~\cite{chen2023efficient,vignac2022digress}. However, these works require large numbers of observed graphs for supervision, and the diffusion-denoising process is time-consuming, which limits their application in the MGC task.
On the contrary, in the proposed variational graph generator, the uncertainty for inferring the consensus graph depends on how reliable the observed graphs are for the clustering task (Theorems~\ref{theorem1}~and~\ref{theorem2}). Therefore, the inferred graph is more robust and task-related. Moreover, to the best of our knowledge, this is the first trial to directly infer a graph for multi-view clustering.
\section{Methods}
\subsection{Problem Statement}
Given multiple graphs with their attributed features $\{\mathcal{G}^v=(\mathbf{X}^v, \mathbf{A}^v)\}_{v=1}^V$, where $\mathbf{X}^v \in \mathbb{R}^{n \times d_v}$ denotes the features of $n$ nodes in the graph of the $v$-th view, and $\mathbf{A}^v=\{a_{ij} \mid a_{ij} \in \{0,1\}\}$ denotes the corresponding adjacency matrix (with self-loop), respectively, MGC aims to partition the $n$ samples into $c$ clusters. 
Let a diagonal matrix $\mathbf{D}^v$ (where $\mathbf{D}^v_{ii}=\sum_ja^v_{ij}$) represent the degree matrix of $\mathbf{A}^v$, the normalized adjacent matrix is defined as $\mathbf{\widetilde{A}}^v=(\mathbf{D}^v)^{-1}\mathbf{A}^v$. 
We define $\mathbf{\overline{X}} \in \mathbb{R}^{n \times d'}$ as a global feature shared by all views. Generally, if the dataset does not contain this feature, it is generated by concatenating $\{\mathbf{X}^v\}^V_{v=1}$, \emph{i.e.}, $\mathbf{\overline{X}}=Concat(\mathbf{X}^{\it1}, \mathbf{X}^{\it2}, ..., \mathbf{X}^V)$ where $Concat(\cdot,\cdot)$ denotes concatenating operation. 
Specifically, let $\mathbf{\overline{Z}}\in \mathbb{R}^{n \times (V\cdot D)}$ and $\mathbf{Z}^v \in \mathbb{R}^{n \times D}$ be the global and each view's latent representation, respectively. The goal of this work is to learn better MGC by leveraging the novel variational graph generator and an effective graph encoder. The overall descriptions of notations are shown in Appendix~\ref{appNotation}.

\subsection{Overview of Variational Graph Generator for Multi-View Graph Clustering}
\begin{figure*}[t]
  \centering
  \includegraphics[width=1\linewidth]{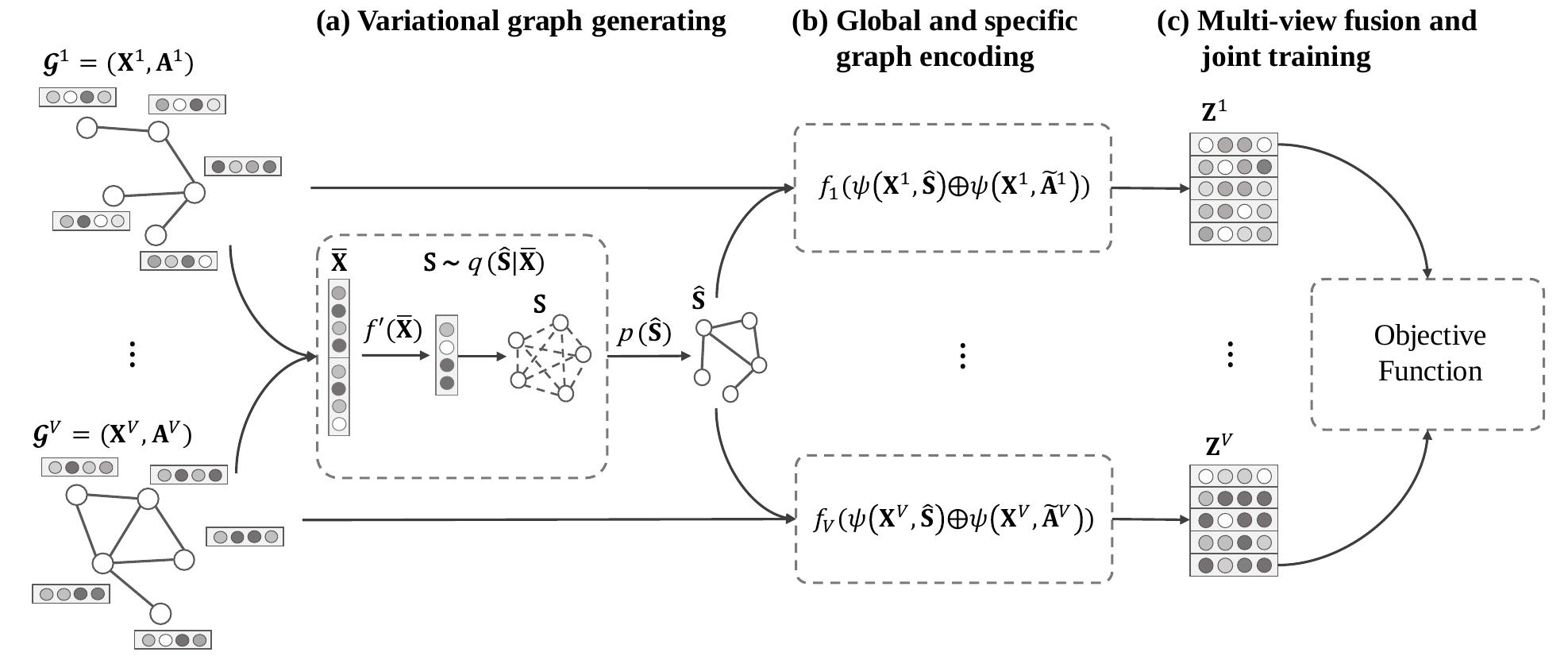}
  \caption{Overview of the VGMGC framework. The symbols $\overline{\mathbf{X}}$, $\hat{\mathbf{S}}$, $\mathbf{A}^v$ and $\mathbf{Z}^v$ represent the global representation derived from all views $\{\textbf{X}^v\}^V_{v=1}$, the learned consensus graph, the adjacency matrix, and final output representations of $v$-th view, respectively. $q(\cdot)$ and $p(\cdot)$ denote the posterior and prior probabilities, respectively. Symbol $\oplus$ denotes the concatenation operation. $\psi(\cdot)$ is the proposed message passing scheme in Eq.~\eqref{eqMessagePassing} and $f(\cdot)$ is the MLP for low-dimensional representation learning. VGMGC first infers a cross-view consensus graph $\mathbf{\hat{S}}$ by the variational graph generator (a). Then this variational consensus graph, combined with each attributed view-specific graph, is processed by the global and specific graph encoder (b). This encoder generates the latent features $\mathbf{Z}^v$ for each view. Finally, the latent features from all views are weighted and concatenated for training and clustering (c).}
  \label{overview}
\end{figure*}
The overview of VGMGC is shown in Figure~\ref{overview}. In the following, we will concentrate on three questions to disassemble our proposed framework: a) how the variational consensus graph is inferred from multi-view graph data (see Section~\ref{secVGG}), b) how the variational consensus graph is incorporated with each view's specific graph and features to generate latent feature $\mathbf{Z}^v$ (see Section~\ref{secGraphEnc}), and c) how to fuse multi-view latent features for collaborative training and clustering (see Section~\ref{SecMVFusion}).

\subsection{Variational Graph Generator}
\label{secVGG}
We first present our variational graph generator, which can infer a discrete consensus graph containing the common feature information as well as common topological information simultaneously, and we will introduce how the two kinds of common information are captured in each step. On top of that, another difference between our variational graph generator and previous VAEs~\cite{vae,vae4clust,multiVAE,GAE} is that we aim at inferring a discrete graph directly, but previous VAEs are to infer latent embeddings of nodes and construct the graph by computing the similarities between these embeddings.

Let $\mathbf{\hat{S}}=\{\hat{s}_{ij} \mid \hat{s}_{ij} \in \{0,1\}\}$ be an independent consensus graph. We aim to model the multi-graph data $\{\mathbf{A}^v\}^V_{v=1}$ to find a better consensus graph, \emph{i.e.}, $\mathbf{\hat{S}}$. Therefore, this generative model tries to learn the joint distribution:
\begin{equation}
\begin{aligned}
    p(\mathbf{X}^v, \mathbf{\hat{S}}, \mathbf{A}^v) &=p(\mathbf{A}^v \mid \mathbf{X}^v,\mathbf{\hat{S}})p(\mathbf{X}^v,\mathbf{\hat{S}}), 
\end{aligned}
\end{equation}
where $p(\mathbf{X}^v, \mathbf{\hat{S}}, \mathbf{A}^v)$ represents the joint probability of $\mathbf{X}^v$, $\mathbf{\hat{S}}$, and $\mathbf{A}^v$.
\subsubsection{A Priori Assumption Over Multi-Graph}
\label{secAssPrior}
The priori assumption could not only give the consensus graph a definition but also inject some task-relevant information and consensus information among graphs into the consensus graph.

Begin with the definition of this graph, we consider that the consensus graph is a Gilbert random graph~\cite{GilbertRandomGraph}, where the edges in $\mathbf{\hat{S}}$ are conditionally independent with each other, so the probability of generating $\mathbf{\hat{S}}$ could be factorized as:
\begin{equation}
\nonumber
    p(\mathbf{\hat{S}}) = \prod_{i,j}p(\hat{s}_{ij}).
\end{equation}1qq
An evident instantiation of $p(\hat{s}_{ij})$ is the Bernoulli distribution, \emph{i.e.}, $\hat{s}_{ij} \thicksim Bern(\beta_{ij})$. $p(\hat{s}_{ij}=1)=\beta_{ij}$ is the prior probability of whether there exists an edge between nodes $i$ and $j$, where $\boldsymbol{\beta}$ are parameters determined by all observed graphs $\{\mathbf{A}^v\}_{v=1}^V$ with their corresponding optimizable beliefs $\{b^v\}_{v=1}^V$. 
The belief $b^v \in (0,1]$ of $\mathbf{A}^v$, which could be obtained by a self-supervised method (which will be introduced in Section~\ref{SecMVFusion}), is to control how much the $v$-th graph should be trusted for our clustering task. Then, we aggregate all graphs $\{\mathbf{A}^v\}_{v=1}^V$ with their corresponding beliefs $\{b^v\}_{v=1}^V$ by computing the consensus among them to learn the prior parameter $\boldsymbol{\beta}$:
\begin{equation}
\label{eqBeta}
    \beta_{ij} = \frac{\sum_{v} [b^v \cdot a^v_{ij} + (1-b^v) \cdot (1-a^v_{ij})]}{\sum_{v}b^v}.
\end{equation}

From Eq.~\eqref{eqBeta}, the belief (or task relevance $\boldsymbol{\beta}$) of each graph and the consensus among the multiple graphs can be injected into the consensus graph. This conclusion is also proved in Theorems \ref{theorem1} and \ref{theorem2}.

\subsubsection{Inference Process} 
In the inference process, the variational graph generator tries to capture the common feature information from global features by computing the posterior probability.

Specifically, the variational consensus graph $\mathbf{\hat{S}}$ is generated from global features $\mathbf{\overline{X}}$, so the posterior of $\mathbf{\hat{S}}$ could be written as $p(\mathbf{\hat{S}} \mid \mathbf{\overline{X}})$. However, considering the posterior is intractable to be calculated~\cite{vae, vae4clust}, we propose a neural network which is based on self-attention~\cite{Transformer} to approximate it, \emph{i.e.}, $q_{\phi'}(\mathbf{\hat{S}} \mid \mathbf{\overline{X}})$ where $\phi'$ denotes the trainable parameters of the neural network. The proposed neural network could be written as:
\begin{equation}
\label{eqAlpha}
    \boldsymbol{\alpha} = \mathbf{K}\mathbf{Q}^T,
    \text{ where }\mathbf{K} = f'(\mathbf{\overline{X}}), \mathbf{Q} = \mathbf{K} \mathbf{W},
\end{equation}
where $\boldsymbol{\alpha}=\{\alpha_{ij}\} \in \mathbb{R}^{n \times n}$ denotes the neurons used to obtain variable $\mathbf{\hat{S}}$, $f'$ represents a multilayer perceptron (MLP), and $\mathbf{W}$ is the trainable parameters. Therefore, $\phi'$ contains trainable parameters in $f'$ and $\mathbf{W}$. 

Moreover, due to the structural information $\boldsymbol{\alpha}$ needs to reserve the important information of global features $\mathbf{\overline{X}}$, we are expected to maximize the mutual information between $\boldsymbol{\alpha}$ and $\mathbf{\overline{X}}$, \emph{i.e.}, $I(\boldsymbol{\alpha},\mathbf{\overline{X}})$.
However, maximizing $I(\boldsymbol{\alpha},\mathbf{\overline{X}})$ is intractable, we achieve this by maximizing its lower bound.

In doing so, from Eq.~\eqref{eqAlpha}, we can know that the information in $\boldsymbol{\alpha}$ is deterministic given $\mathbf{K}$ and $\mathbf{Q}$. So, we have:
\begin{equation}
\nonumber
   \arg\max I(\boldsymbol{\alpha}, \mathbf{\overline{X}})  \cong \arg\max I(\mathbf{K},\mathbf{\overline{X}}) \text{ and} \arg\max I(\mathbf{Q}, \mathbf{\overline{X}}).
\end{equation}
Considering the three variables $\mathbf{K}$, $\mathbf{Q}$ and $\mathbf{\overline{X}}$, we have the Markov chain: $\mathbf{\overline{X}} \rightarrow \mathbf{K} \rightarrow \mathbf{Q}$. According to Data Processing Inequality, we can get 
$    I(\mathbf{K},\mathbf{\overline{X}}) \geq I(\mathbf{Q}, \mathbf{\overline{X}}).
$
This demonstrates that $I(\mathbf{Q}, \mathbf{\overline{X}})$ is the lower bound of $I(\mathbf{K},\mathbf{\overline{X}})$, and we have:
\begin{equation}
\nonumber
    \arg\max I(\mathbf{K},\mathbf{\overline{X}}) \text{ and } \arg\max I(\mathbf{Q}, \mathbf{\overline{X}}) \cong \arg\max I(\mathbf{Q}, \mathbf{\overline{X}}).
\end{equation}
By concluding the two equations above, we obtain Eq.~\eqref{eqMIalpha}:
\begin{equation}
\label{eqMIalpha}
    \underset{\phi'}{\arg\max} I(\boldsymbol{\alpha},\mathbf{\overline{X}}) \cong \underset{\phi'}{\arg\max} I(\mathbf{Q},\mathbf{\overline{X}}).
\end{equation}

Eq.~\eqref{eqMIalpha} enforces the posterior probability to reserve as much global feature information as possible.

On the other hand, due to the discrete nature of graph $\mathbf{\hat{S}}$, we adopt reparameterization trick to optimize these parameters with gradient-based methods~\cite{Gumbelsoft}. In this work we utilize binary concrete distribution for relaxation, \emph{i.e.}, $s_{ij} \thicksim BinConcrete(\alpha_{ij}, \tau)$, where $\tau > 0$ and $\alpha_{ij}$ represent the temperature and location parameters respectively. Specifically, let variable $U \thicksim Uniform(0,1)$, the relaxed weight $s_{ij} \in (0,1)$ of edge $(i,j)$ is calculated as:

\begin{equation}
\begin{aligned}
\label{eqRepara}
    q_{\phi'}(\hat{s}_{ij} \mid \mathbf{\overline{x}}) 
    =\sigma((\log{U} - \log{(1-U)} + \alpha_{ij})/\tau),\\
    \text{ with }U \thicksim Uniform(0,1),
\end{aligned}
\end{equation}
where $\sigma(\cdot)$ denotes the \emph{Sigmoid} function. It is easy to demonstrate the rationality of using the binary concrete distribution of $s_{ij}$ to approximate the priori Bernoulli distribution of $\hat{s}_{ij}$~\cite{binConc}.

\subsubsection{Generative Process}
The generative process can enhance the common topological information among multiple graphs by reconstructing every graph.
Concretely, after obtaining the latent consensus graph $\mathbf{\hat{S}}$, in the generative process, we model all views' graphs $\{\mathbf{A}^v\}^V_{v=1}$ in a collaborative manner. Let $\{\xi^v\}^V_{v=1}$ be the learnable parameters of decoder, the likelihood of $v$-th graph is as follow:
\begin{equation}
\label{eqGen}
    \mathbf{\breve{A}}^v = p_{\xi^v}(\mathbf{A}^v \mid \mathbf{X}^v, \mathbf{\hat{S}}).
\end{equation}
\subsubsection{Evidence Lower Bound}
 According to Jensen's inequality and our Bayesian net (Figure~\ref{fig:BayesianNet}), it is easy to prove that maximizing the likelihood of observed graphs (\emph{i.e.}, $p(\mathbf{A}^v)$) is equal to maximizing the evidence lower bound (ELBO):
\begin{equation}
\nonumber
\begin{aligned}
    &\sum^V_{v=1} \log{p(\mathbf{A}^v)} 
    = \sum^V_{v=1} \log{\int \int p({a}^v, {\hat{s}}, {x}^v) d{\hat{s}} d{x}^v}\\
    &=\sum^V_{v=1} \log \int\int \frac{q({\hat{s}}\mid  \overline{x})}{q({\hat{s}}\mid \overline{x})} p({a}^v, {\hat{s}}, {x}^v) d{\hat{s}} d{x}^v\\
    &= \sum^V_{v=1} \log \int \mathbb{E}_{q({\hat{s}}\mid \overline{x})} \frac{ p({a}^v, {\hat{s}}, {x}^v)}{q({\hat{s}}\mid \overline{x})} dx^v\\
    &\geq \sum^V_{v=1} \mathbb{E}_{q({\hat{s}}\mid \overline{x})} \log \int \frac{ p({a}^v, {\hat{s}}, {x}^v)}{q({\hat{s}}\mid \overline{x})} dx^v, \text{ (Jensen's inequality),}\\
    &= \sum^V_{v=1} \mathbb{E}_{q({\hat{s}}\mid \overline{x})}\log \int \frac{p(a^v\mid \hat{s}, x^v)p(\hat{s})}{q(\hat{s}\mid \overline{x})} p(x^v) dx^v, \text{ ($\hat{s}\perp\!\!\!\!\perp x^v \mid \emptyset$),}\\
    &\geq\sum^V_{v=1} \mathbb{E}_{q({\hat{s}}\mid \overline{x})p(x^v)}\log p(a^v\mid \hat{s},x^v) + \mathbb{E}_{q({\hat{s}}\mid \overline{x})}\log\frac{p(\hat{s})}{q({\hat{s}}\mid \overline{x})}\\
    &=\sum^V_{v=1} \mathbb{E}_{q({\hat{s}}\mid \overline{x})p(x^v)}\log p(a^v\mid \hat{s},x^v) - KL(q({\hat{s}}\mid \overline{x}) \Vert p(\hat{s})).
\end{aligned}
\end{equation}
Here, $\hat{s}\perp\!\!\!\!\perp x^v \mid \emptyset$ is derived from the Bayesian net (Figure~\ref{fig:BayesianNet}).
\begin{figure}[t]
    \centering
    \includegraphics[width=0.6\linewidth]{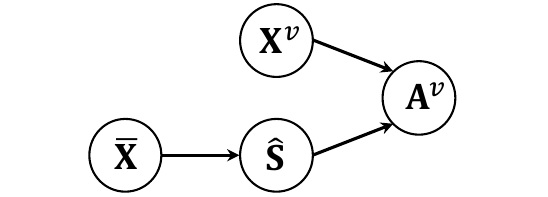}
    \caption{The Bayesian net of our variational graph generator about variables: $\mathbf{A}^v$, $\mathbf{\hat{S}}$, $\mathbf{\overline{X}}$ and $\mathbf{X}^v$.}
    \label{fig:BayesianNet}
\end{figure}
Therefore, our first objective function, \emph{i.e.}, ELBO, can be written as:
\begin{equation}
\begin{aligned}
\label{eqELBO}
    \underset{\phi',\{\xi^v\}^V_{v=1}}{\arg\max} \mathcal{L}_{E} &= \sum^V_{v=1}\mathbb{E}_{q_{\phi'}(\mathbf{\hat{S}} \mid {\mathbf{\overline{X}}})p(\mathbf{X}^v)}\left[\log{p_{\xi^v}(\mathbf{A}^v \mid \mathbf{\hat{S}},\mathbf{X}^v)}\right] \\
    &- KL\left(q_{\phi'}(\mathbf{\hat{S}} \mid \mathbf{\overline{X}}) \parallel p(\mathbf{\hat{S}}) \right).
\end{aligned}
\end{equation}

The first term of Eq.~\eqref{eqELBO} is the expectation of the conditional likelihood over the approximate posterior, \emph{i.e.}, expected log likelihood (ELL), and the second term is the KL divergence between the approximate posterior of the variational consensus graph and its prior. Under our priori assumption, \emph{i.e.}, $p(s_{ij})=Bern(\beta_{ij})$, the KL divergence is bounded:
\begin{equation}
\begin{aligned} 
\label{eqELBOKL}
    KL\left( q_{\phi'}(\mathbf{\hat{S}} \mid {\mathbf{\overline{X}}}) \parallel p(\mathbf{\hat{S}}) \right) 
    &=-H(q_{\phi'}(\mathbf{\hat{S}})) + \sum_{ij}\log{\frac{1}{\beta_{ij}}} \\
    &\leq \sum_{ij}\log{\frac{1}{\beta_{ij}}},\\
\end{aligned}
\end{equation}
where $H(\cdot)$ denotes entropy and $\boldsymbol{\beta}$ is determined by the observed graphs and corresponding beliefs (see Eq.~\eqref{eqBeta}). From this bound, we can know that the maximum variational information in the variational consensus graph is dependent on the consistency of these observed graphs and how relative they are to our clustering task. For example, when all graphs are consistent with $\{a^v_{ij}=1\}^V_{v=1}$, we could get $\log{\beta_{ij}=1}$, which enforces $H(q(\hat{s}_{ij}))=0$.

\subsection{Global and Specific Graph Encoding}
\label{secGraphEnc}
In this section, we will propose our graph encoder, which contains three inputs, \emph{i.e.}, observed graph data $\mathbf{A}^v$ with its attributed features $\mathbf{X}^v$, and a variational consensus graph $\mathbf{\hat{S}}$. The aim of this graph encoder is to inject the consensus graph structure information into each specific view and generate embeddings. So this module could be decomposed into two steps: 1) message-passing-based graph embedding (denoted by $\psi(\cdot)$) to embed the specific and consensus graph structure, respectively; 2) low-dimensional representation learning (denoted by $f_v(\cdot)$) to fuse view-specific graph embeddings and consensus graph embeddings while obtaining a lower-dimensional representation.

\subsubsection{Message-Passing-Based Graph Embedding}
Inspired by previous GNNs~\cite{SGC,S2GC,GCN2}, we remove most redundant parameters in GNNs and introduce a simple yet effective message-passing method in this part.

Let $\mathbf{E}^{(l)}$ be the embedding of $l$-th layer, the definition of vanilla GNN~\cite{GCN} is as follows:
\begin{equation}
\nonumber
    \mathbf{E}^{(l+1)} = (\mathbf{\widetilde{A}}\mathbf{E}^{(l)}\mathbf{W}^{(l)}), \text{where } \mathbf{E}^{(0)} = \mathbf{X}.
\end{equation}
We first remove the unnecessary parameters $\mathbf{W}$ from this equation, and then, add a residual connection to every layer from the input features $\mathbf{X}$. Therefore, the improved parameter-free graph neural network could be written as:
\begin{equation}
\nonumber
    \mathbf{E}^{(l+1)} = \mathbf{\widetilde{A}}\mathbf{E}^{(l)} + \mathbf{X}.
\end{equation}
Finally, we can obtain our message-passing method via mathematical induction:
\begin{equation}
\label{eqMessagePassing}
    \psi(\mathbf{X}, \mathbf{\widetilde{A}})=\left(\sum^{order}_{l=0}\mathbf{\widetilde{A}}^l+\mathbf{I}\right)\mathbf{X},
\end{equation}
where $order$ is a hyper-parameter which denotes the aggregation orders on graph $\mathbf{A}$, and $\mathbf{I}$ denotes the identity matrix. $\mathbf{\widetilde{A}}$ and $\mathbf{X}$ are normalized graphs and their attributed features, respectively.

\subsubsection{Low-Dimensional Representation} A straightforward instantiation of $f_v(\mathbf{\widetilde{A}}, \mathbf{\hat{S}}, \mathbf{X})$ is MLP. Concretely, we concatenate the specific graph embeddings and consensus graph embeddings and feed them into this MLP to get a lower-dimensional embedding $\mathbf{Z}^v \in \mathbb{R}^{n \times D}$:
\begin{equation}
\begin{aligned}
    \mathbf{Z}^v
    &=f_v(\mathbf{\widetilde{A}}^v, \mathbf{\hat{S}},\mathbf{X}^v)\\
    &= f_v\left(Concat(\psi(\mathbf{X}^v, \mathbf{\widetilde{A}}^v), \psi(\mathbf{X}^v, \mathbf{\hat{S}}))\right),
\end{aligned}
\end{equation}
Here, the propagation of $\mathbf{X}^v$ over $\psi$ and $f_v$ will lead to some information loss. To reserve more view-specific feature information, we maximize the mutual information $I(\mathbf{X}^v, \mathbf{Z}^v)$:
\begin{equation}
     \underset{\phi^v}{\arg\max} I(\mathbf{Z}^v,\mathbf{X}^v),
\end{equation}
where $\phi^v$ represents the learnable parameters in $f_v$.
\subsection{Multi-View Fusion for Clustering Task}
\label{SecMVFusion}
\subsubsection{Multi-View Fusion}
To effectively fuse all views' embeddings $\{\mathbf{Z}^v\}^V_{v=1}$ for the clustering task, a natural approach is to assign a task-relevant weight to each view. According to this intuition, we assign beliefs $\{b^v\}^V_{v=1}$ for $V$ views. These beliefs imply how the corresponding view is relative to the clustering task, and their values are optimized in the training process by a self-supervised strategy (see Section~\ref{sec:beliefsOpti}). After the beliefs are obtained, we reweight every view and concatenate them to get final global representations $\mathbf{\overline{Z}}\in \mathbb{R}^{n \times (V \cdot D)}$:
\begin{equation}
\label{eqFinalZ}
    \mathbf{\overline{Z}}=Concat(b^{\it1}\mathbf{Z}^{\it1}, b^{\it2}\mathbf{Z}^{\it2}, ..., b^{\it V}\mathbf{Z}^V).
\end{equation}
In the decision process, the optimized beliefs are directly used to obtain the global representation $\mathbf{\overline{Z}}$ via Eq.~\eqref{eqFinalZ} which is then fed into $k$-means for clustering. Next, we will introduce how the beliefs are optimized in the training process.
\subsubsection{Beliefs Updating}
\label{sec:beliefsOpti}
To effectively find the beliefs for each view, we propose a self-supervised strategy. Firstly, all beliefs are initialized to $1$, \emph{i.e.}, $\{b^{(0),v} \mid b^{(0),v}=1\}^V_{v=1}$, where $b^{(0),v}$ denotes the initial belief of $v$-th view. Then, in $t$-th training epoch, $\mathbf{\overline{Z}}^{(t-1)}$ is generated via Eq.~\eqref{eqFinalZ} and fed into $k$-means on to obtain the pseudo-labels of this epoch. These pseudo-labels are used to evaluate the quality of each view's embeddings $\mathbf{\overline{Z}}^{(t-1)}$. Specifically, each view's embeddings $\mathbf{\overline{Z}}^{(t-1)}$ are simultaneously fed into $k$-means to get view-specific predictions. The pseudo-labels from $\mathbf{\overline{Z}}^{(t-1)}$ are considered as ground-truth and other predicted labels from $\{\mathbf{Z}^v\}^V_{v=1}$ are considered as prediction results to compute each view's clustering score (\emph{e.g.}, using normalized mutual information), \emph{i.e.}, $\{score^{(t),v}\}^V_{v=1}$. These scores are then normalized for updating beliefs:
\begin{equation}
\label{eqBelief}
    b^{(t),v} = \left(\frac{score^{(t),v}}{\text{max}(score^{(t),1}, score^{(t),2}, ..., score^{(t),V})}\right)^\rho,
\end{equation}
where $\rho \geq 0$ is a soft parameter. When $\rho=0$, we can get $b^{(t),v}=1$ for all $t$ and $v$, which denotes that all graphs are equally believed; when $\rho \rightarrow \infty$, $\{b^{(t),v}\}^V_{v=1}$ are binarized.
Finally, we can generate $\mathbf{\overline{Z}}^{(t)}$ via Eq.~\eqref{eqFinalZ} under the new beliefs $\{b^{(t),v}\}_{v=1}^V$, which will be utilized to train the model and generate a more correct belief in next epoch.

\subsubsection{Multi-View Clustering Loss}
The clustering loss is widely used in the traditional clustering task~\cite{MAGCN,DEC,xu2021deep}, which encourages the assignment distribution of samples in the same cluster to be more similar. Concretely, let $Q^v$ be a soft assignment calculated by Student's $t$-distribution~\cite{tsne} of $v$-th view, and $P^v$ be a target distribution calculated by sharpening the soft assignment $Q^v$. For MGC, we encourage each view's soft distribution to fit the global representation's target distribution by KL divergence:
\begin{equation}
    \label{eq:KLloss}
    \mathcal{L}_{c} = \sum^V_{v=1} KL(\overline{P} \Vert Q^v) + KL(\overline{P} \Vert \overline{Q}),
\end{equation}
where $\overline{P}$ and $\overline{Q}$ denote the target and soft distribution of global representation $\mathbf{\overline{Z}}$.

\subsection{Objective Function}
\label{secObj}
Our objective function contains three parts: maximizing ELBO ($\mathcal{L}_{E}$) to optimize the variational graph generator; maximizing mutual information ($I(\mathbf{Z}^v, \mathbf{X}^v)$ and $I(\mathbf{Q}, \mathbf{\overline{X}})$) to preserve feature information; and minimizing clustering loss ($\mathcal{L}_{c}$) to optimize the task-related clustering results. $\gamma_c$ and $\gamma_E$ are introduced to trade off the numerical value of $\mathcal{L}_c$ and $\mathcal{L}_E$.
Formally, we have the objective function:
\begin{equation}
\label{eqFinalLoss}
\begin{aligned}
    \underset{\phi',\eta', \{\phi^v\}^V_{v=1},\{\eta^v\}^V_{v=1}}{\arg\min}  \mathcal{L}_r + \gamma_c\mathcal{L}_{c} - \gamma_E\mathcal{L}_{E}.
\end{aligned}
\end{equation}
Here, $\mathcal{L}_r$ is an instantiation of negative lower bound of $I(\mathbf{Z}^v, \mathbf{X}^v)$ and $I(\boldsymbol{\alpha}, \mathbf{\overline{X}})$ (see Section \ref{secLBMI}), and  $\{\eta^v\}^V_{v=1}$ and $\eta'$ denote corresponding learnable parameters. Notably, the parameters of $p_{\xi^v}(\mathbf{A}^v \mid \mathbf{\hat{S}},\mathbf{X}^v)$ and $f_v$ are shared in our model, so $\{\xi^v\}^V_{v=1}$ is omitted in Eq.~\eqref{eqFinalLoss}.
More implementation details of each term can be found in Appendix~\ref{appLossFunction}.

\subsection{Theoretical Analysis}
\label{secTheoAna}
In this subsection, we propose two theorems for demonstrating that the consensus and task relevance are implied in the generated consensus graph at first (see Section~\ref{secConsensus}). Then, we introduce the variational graph information bottleneck ($\mathcal{IB}_{graph}$), and prove the connection between our objective function with $\mathcal{IB}_{graph}$. This can further explain what kind of information is learned by the consensus graph (see Section~\ref{secIBvar}). Finally, we compute the lower bound of mutual information to optimize the proposed $I(\mathbf{Z}^v, \mathbf{X}^v)$ and $I(\boldsymbol{\alpha}, \mathbf{\overline{X}})$ (see Section~\ref{secLBMI}).

\subsubsection{Uncertainty of Consensus Graph}
\label{secConsensus}
Let $H(\mathbf{\hat{S}})$ denote the uncertainty of $\mathbf{\hat{S}}$, and $d_{Ham}(\mathbf{A}^{\it1}, \mathbf{A}^{\it2})$ denote the Hamming distance, which quantizes the inconsistency of graph $\mathbf{A}^{\it1}$ and $\mathbf{A}^{\it2}$. We have the following theorems:
\begin{theorem}[Consensus among graphs]
\label{theorem1}
Assume all graphs are equally reliable, the more consistent the observed graphs are, the less uncertain the consensus graph is, and vice versa:
\begin{equation}
    H(\mathbf{\hat{S}}) \propto d_{Ham}(\mathbf{A}^{\it1}, \mathbf{A}^{\it2}).
\end{equation}
\end{theorem}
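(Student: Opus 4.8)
The plan is to reduce the prior uncertainty $H(\mathbf{\hat{S}})$ induced by $p(\mathbf{\hat{S}})=\prod_{i,j}p(\hat{s}_{ij})$ to a sum of per-edge binary entropies, and then read off the dependence on the Hamming distance directly from the form of $\boldsymbol{\beta}$ in Eq.~\eqref{eqBeta}. First I would invoke the Gilbert random graph assumption: since the edges $\hat{s}_{ij}\thicksim Bern(\beta_{ij})$ are mutually independent, entropy is additive, so
\begin{equation}
\nonumber
H(\mathbf{\hat{S}}) = \sum_{i,j} H(\hat{s}_{ij}) = \sum_{i,j} h_b(\beta_{ij}),
\end{equation}
where $h_b(\beta)=-\beta\log\beta-(1-\beta)\log(1-\beta)$ is the binary entropy function.

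Next I would specialize Eq.~\eqref{eqBeta} to the hypothesis that all graphs are equally reliable, i.e. $b^1=b^2=\cdots=b^V$ (the natural instance being $b^v\equiv 1$). In that case the belief cancels and $\beta_{ij}$ reduces to the fraction of observed graphs that contain edge $(i,j)$; for two graphs this means $\beta_{ij}=0$ when $a^1_{ij}=a^2_{ij}=0$, $\beta_{ij}=1$ when $a^1_{ij}=a^2_{ij}=1$, and $\beta_{ij}=\tfrac12$ when $a^1_{ij}\neq a^2_{ij}$. Substituting into $h_b$, the two agreement cases contribute $h_b(0)=h_b(1)=0$, whereas the disagreement case contributes $h_b(\tfrac12)=\log 2$, the maximal value. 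Hence only the edges on which $\mathbf{A}^1$ and $\mathbf{A}^2$ differ carry any uncertainty.

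Finally I would count those edges: by definition $d_{Ham}(\mathbf{A}^1,\mathbf{A}^2)=|\{(i,j):a^1_{ij}\neq a^2_{ij}\}|$, so combining the three observations gives $H(\mathbf{\hat{S}})=(\log 2)\,d_{Ham}(\mathbf{A}^1,\mathbf{A}^2)$, which is exactly the claimed proportionality (with constant $\log 2$), and monotonicity delivers the ``more consistent $\Rightarrow$ less uncertain, and vice versa'' reading.

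I do not expect a genuine obstacle here; the delicate points are bookkeeping. One must (i) justify additivity of the entropy from edge independence, and (ii) verify that the belief really cancels under the equal-reliability assumption so that $\beta_{ij}$ becomes the plain edge frequency and thus lands in $\{0,\tfrac12,1\}$ for $V=2$. The one thing to be careful about when stating the result is that for $V>2$ the same computation yields $H(\mathbf{\hat{S}})=\sum_{i,j}h_b(k_{ij}/V)$ with $k_{ij}$ the number of graphs containing edge $(i,j)$, which is still monotone in how ``spread out'' the graphs are but no longer equals a single pairwise Hamming distance, so the clean proportionality is inherently a two-graph phenomenon and the statement should be read accordingly.
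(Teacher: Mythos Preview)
Your argument is internally coherent but it proves a different statement by a different mechanism than the paper. You interpret $H(\mathbf{\hat{S}})$ as the Shannon entropy of the \emph{prior} $p(\mathbf{\hat{S}})=\prod_{ij}Bern(\beta_{ij})$ and evaluate it exactly via $\sum_{ij}h_b(\beta_{ij})$. The paper instead treats $H(\mathbf{\hat{S}})$ as the entropy of the \emph{variational posterior} $q_{\phi'}(\mathbf{\hat{S}}\mid\mathbf{\overline{X}})$ and never computes it directly: it uses non-negativity of the KL term in Eq.~\eqref{eqELBOKL}, namely $-H(q_{\phi'}(\mathbf{\hat{S}}))+\sum_{ij}\log(1/\beta_{ij})\geq 0$, to obtain the \emph{upper bound} $H(\mathbf{\hat{S}})\leq\sum_{ij}\log(1/\beta_{ij})$, and then evaluates that bound case-by-case for general common belief $b$, arriving at $d_{Ham}\log 2b + k\log\tfrac{b}{1-b}$. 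So the paper establishes a proportionality of the \emph{maximal attainable} posterior uncertainty to $d_{Ham}$, tied to the training objective through the KL regularizer, whereas you establish an exact identity for the prior uncertainty with no reference to $q_{\phi'}$ or the loss. Your version is cleaner and more elementary; the paper's version is the one that actually explains why the \emph{learned} graph becomes more certain when the inputs agree.

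One small slip to fix: your claim that ``the belief cancels'' whenever $b^1=\cdots=b^V$ is false. For common belief $b$ and edge count $k_{ij}=\sum_v a^v_{ij}$, Eq.~\eqref{eqBeta} gives $\beta_{ij}=\big(k_{ij}(2b-1)+V(1-b)\big)/(Vb)$, which equals the edge frequency $k_{ij}/V$ only at $b=1$. Since you then work at $b=1$ anyway, your numerical conclusion $H(\mathbf{\hat{S}})=(\log 2)\,d_{Ham}$ survives, but the justification should read ``set $b=1$'' rather than ``equal reliability makes $b$ cancel''. The paper, by contrast, keeps $b$ generic throughout its bound.
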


\begin{proof}[\textbf{Proof}]
Assume there are two views with equal belief, i.e., $V=2$ and $b^{\it1}=b^{\it2}=b$, we have:
\begin{equation}
\nonumber
\begin{aligned}
 \log {\beta_{ij} }  &= \log { \frac{ b (a^{\it1}_{ij} + a^{\it2}_{ij}) + (1-b)(2 - (a^{\it1}_{ij} + a^{\it2}_{ij})) }{2b} }.\\
\end{aligned}
\end{equation}
Note that KL divergence is always positive, so we have :
\begin{equation}
\small
\nonumber
\begin{aligned}
	 &KL\left( q_{\phi'}(\mathbf{\hat{S}} \mid {\mathbf{\overline{X}}}) \parallel p(\mathbf{\hat{S}}) \right) 
	 = -H(\mathbf{\hat{S}}) + \sum_{ij}\log{\frac{1}{\beta_{ij}}} \geq 0 \\
    &\Rightarrow  H(\mathbf{\hat{S}}) \leq \sum_{ij}\log{\frac{1}{\beta_{ij}}} \\
    &=\sum_{ij}\log\frac{2b}{{b(a^{\it1}_{ij} + a^{\it2}_{ij}) + (1-b)(2- (a^{\it1}_{ij} + a^{\it2}_{ij})) }}\\
    &=\sum_{ij}\log{2b} - \sum_{ij}C_{ij},
\end{aligned}
\end{equation}
where $C_{ij}=\log{{b(a^{\it1}_{ij} + a^{\it2}_{ij}) + (1-b)(2- (a^{\it1}_{ij} + a^{\it2}_{ij})) }}$, and we have:
\begin{equation}
\nonumber
    C_{ij}=
    \begin{cases}
     \log{(b+1-b)}=0, &\text{if }a^{\it1}_{ij}\neq a^{\it2}_{ij},\\
      \log{2b},     &\text{if } a^{\it1}_{ij} = a^{\it2}_{ij}=1,\\
      \log{2(1-b)}, &\text{if } a^{\it1}_{ij} = a^{\it2}_{ij}=0.
    \end{cases}
\end{equation}
Let $n$, $t$ and $k$ be the numbers of all edges, $a^{\it1}_{ij} = a^{\it2}_{ij}=1$ and $a^{\it1}_{ij} = a^{\it2}_{ij}=0$ respectively, so we have:
\begin{equation}
\nonumber
     H(\mathbf{\hat{S}}) \leq n\log{2b} - t\log{2b} - k\log{2(1-b)}.
\end{equation}
Considering $d_{Ham}=\sum_{ij} \vert a^{\it1}_{ij} - a^{\it2}_{ij} \vert = n-t-k$, we have:
\begin{equation}
\nonumber
     H(\mathbf{\hat{S}}) \leq d_{Ham}\log{2b} + k\log{\frac{b}{1-b}}. 
\end{equation}
So, the maximized $H(\mathbf{\hat{S}})$ is proportional to $d_{Ham}$, i.e., $H(\mathbf{\hat{S}})\propto d_{Ham}$.
\end{proof}
Using $H(\mathbf{A}^v, \mathbf{\hat{S}})$ (\emph{e.g.}, cross entropy) to depict the similarity between the distribution of $\mathbf{\hat{S}}$ and $\mathbf{A}^v$, and the belief $b^v \in (0,1]$ can represent the relevance of $v$-th view to our clustering task, we have:
\begin{theorem}[Task relevance of each graph]
\label{theorem2}
For a specific graph, the more relevant the graph is to the clustering task, the greater the similarity between the distribution of $\mathbf{\hat{S}}$ and this graph, and vice versa:
\begin{equation}
    H(\mathbf{A}^v, \mathbf{\hat{S}}) \propto b^v.
\end{equation}
\end{theorem}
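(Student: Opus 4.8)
The plan is to mirror the proof of Theorem~\ref{theorem1}: read $H(\mathbf{A}^v,\mathbf{\hat{S}})$ off the prior of $\mathbf{\hat{S}}$, substitute the closed form of $\boldsymbol{\beta}$ from Eq.~\eqref{eqBeta}, and track how each edge-wise term moves as the belief $b^v$ varies. Since $p(\hat{s}_{ij})=Bern(\beta_{ij})$ and $\mathbf{A}^v$ is treated as a product of Bernoullis with parameters $a^v_{ij}\in\{0,1\}$, the cross entropy factorizes over edges:
\begin{equation}
\nonumber
H(\mathbf{A}^v,\mathbf{\hat{S}}) = -\sum_{ij}\left[a^v_{ij}\log\beta_{ij} + (1-a^v_{ij})\log(1-\beta_{ij})\right],
\end{equation}
so it suffices to control the behaviour in $b^v$ of $-\log\beta_{ij}$ on the edges with $a^v_{ij}=1$ and of $-\log(1-\beta_{ij})$ on the edges with $a^v_{ij}=0$.

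Next I would substitute Eq.~\eqref{eqBeta}. Writing $\beta_{ij} = \frac{\sum_{w}[b^w a^w_{ij} + (1-b^w)(1-a^w_{ij})]}{\sum_w b^w}$, view $v$ contributes $b^v$ to the numerator when $a^v_{ij}=1$ and $1-b^v$ when $a^v_{ij}=0$, while always contributing $b^v$ to the denominator. Splitting the edges by the value of $a^v_{ij}$, exactly as in the $C_{ij}$ case analysis of Theorem~\ref{theorem1}, I would compute $\partial\beta_{ij}/\partial b^v$ --- or, more simply, observe that $\beta_{ij}$ is a convex combination of the $a^w_{ij}$ and the $1-a^w_{ij}$ in which the weight on view $v$'s observed value increases with $b^v$ --- and conclude that raising $b^v$ pushes $\beta_{ij}$ toward $a^v_{ij}$ (toward $1$ where $a^v_{ij}=1$, toward $0$ where $a^v_{ij}=0$). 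In either case the corresponding summand of $H(\mathbf{A}^v,\mathbf{\hat{S}})$ decreases, so $H(\mathbf{A}^v,\mathbf{\hat{S}})$ is monotone in $b^v$: the larger the belief (the more the $v$-th graph is relevant to the clustering task), the closer the distribution of $\mathbf{\hat{S}}$ lies to $\mathbf{A}^v$, and conversely $H(\mathbf{A}^v,\mathbf{\hat{S}})$ is maximal as $b^v\to 0$. To keep the algebra transparent I would present the computation for $V=2$ first, as in Theorem~\ref{theorem1}, and then remark that the same termwise sign analysis carries over to general $V$.

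The main obstacle is the sign of $\partial\beta_{ij}/\partial b^v$ in the genuinely multi-view regime: this derivative also depends on the remaining beliefs $\{b^w\}_{w\neq v}$, its sign being governed by quantities such as $\sum_{w\neq v,\,a^w_{ij}=0}(2b^w-1)$, so strict monotonicity of $\beta_{ij}$ in $b^v$ needs the mild condition $b^w\ge\tfrac{1}{2}$ (which holds in our pipeline, since beliefs are initialized at $1$ and Eq.~\eqref{eqBelief} only rescales them by a factor in $(0,1]$), or else one argues monotonicity only for the aggregate/bound. A secondary point, already present in Theorem~\ref{theorem1}, is that ``$\propto$'' is to be read as ``monotonically related to'', with the direction fixed by the statement: here $H(\mathbf{A}^v,\mathbf{\hat{S}})$ is in fact a \emph{decreasing} function of $b^v$, so ``more relevant'' does mean ``more similar''; I would make this reading explicit rather than assert a literal linear proportionality.
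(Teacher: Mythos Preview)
Your proposal is correct and follows the same skeleton as the paper's proof: factorize $H(\mathbf{A}^v,\mathbf{\hat{S}})$ over edges via the Bernoulli prior, split into the sets $\{a^v_{ij}=1\}$ and $\{a^v_{ij}=0\}$, substitute Eq.~\eqref{eqBeta}, and read off the monotone dependence on $b^v$. The paper carries out exactly this computation but, when substituting $\beta_{ij}$, silently drops the contributions of the other views $w\neq v$ from the numerator, so its final expression $\sum_{ij}\log(\sum_k b^k)-(\sum_{U}\log b^v+\sum_{T}\log(b^v-1+\sum_k b^k))$ holds only in that simplified setting; your derivative/sign analysis and the observation that strict monotonicity of $\beta_{ij}$ in $b^v$ requires a mild condition on the remaining $b^w$ is precisely the care the paper's argument is missing, and your reading of ``$\propto$'' as ``monotone (decreasing) in $b^v$'' is the intended one.
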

\begin{proof}[\textbf{Proof}]
\begin{equation}
\nonumber
\begin{aligned}
    &H(\mathbf{A}^v, \mathbf{\hat{S}})\\
    &= \sum_{ij} a^v_{ij}\log\frac{1}{p(\hat{s}_{ij})} + (1-a^v_{ij})\log\frac{1}{1-p(s_{ij})}\\
    &=\sum_{i,j \in U}\log{\frac{1}{\beta_{ij}}} + \sum_{i,j \in T}\log\frac{1}{1-\beta_{ij}},\\
\end{aligned}
\end{equation}
where $U=\{i,j \mid a^v_{ij} = 1\}$ and $T=\{i,j \mid a^v_{ij} = 0\}$. Therefore, we have:
\begin{equation}
\nonumber
\begin{aligned}
    &H(\mathbf{A}^v, \mathbf{\hat{S}})\\
    &=\sum_{i,j \in U}\log{\frac{\sum^V_{k=1}b^k}{b^va^v_{ij}+(1-b^v)(1-a^v_{ij})}} \\
    &\quad \quad + \sum_{i,j \in T}\log\frac{\sum^V_{k=1}b^k}{\sum^V_{k=1}b^k - b^va^v_{ij}-(1-b^v)(1-a^v_{ij})}\\
    &= \sum_{ij}\log{(\sum^V_{k=1}b^k)} - (\sum_{i,j \in U}\log{b^v} + \sum_{i,j \in T}\log{(b^v -1 + \sum^V_{k=1}b^k}))\\
    &\Rightarrow H(\mathbf{A}^v, \mathbf{\hat{S}}) \propto b^v.
\end{aligned}
\end{equation}
From this equation, we can know that the more the $v$-th graph relative to the clustering task \emph{i.e.}, $b^v$ is larger, the more the $\hat{\mathbf{S}}$ is similar to this graph, \emph{i.e.}, $H(\mathbf{A}^v, \hat{\mathbf{S}})$ is smaller. 
\end{proof}
From Theorem~\ref{theorem1} and Theorem~\ref{theorem2}, we can conclude the following corollary:
\begin{corollary}
\label{corollary1}
The distribution of $\mathbf{\hat{S}}$ is dependent on the consistency of observed graphs and how much they are related to the clustering task.
\end{corollary}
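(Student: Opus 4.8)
The plan is to derive Corollary~\ref{corollary1} as an immediate consequence of Theorems~\ref{theorem1} and~\ref{theorem2}, by observing that a Bernoulli-factorized law such as $p(\mathbf{\hat{S}})=\prod_{ij}\mathrm{Bern}(\beta_{ij})$ is completely characterized by the collection of marginal parameters $\{\beta_{ij}\}$, and that the two theorems exhibit exactly two qualitative ``channels'' through which those parameters — hence the whole distribution — are set: a consistency channel acting on the entropy, and a task-relevance channel acting on the cross-entropies with the observed graphs. First I would recall from Eq.~\eqref{eqBeta} that $\boldsymbol{\beta}$ is a deterministic function of the observed adjacency matrices $\{\mathbf{A}^v\}$ and the beliefs $\{b^v\}$ alone, so that the prior $p(\mathbf{\hat{S}})$ — and, through the KL term of the ELBO bounded in Eq.~\eqref{eqELBOKL}, the admissible posterior $q_{\phi'}(\mathbf{\hat{S}}\mid\mathbf{\overline{X}})$ — carries no structural information beyond what those two ingredients encode.

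Next I would invoke Theorem~\ref{theorem1}: holding the beliefs fixed and equal, the maximal uncertainty $H(\mathbf{\hat{S}})$ is proportional to $d_{Ham}(\mathbf{A}^{\it1},\mathbf{A}^{\it2})$, i.e.\ to the \emph{inconsistency} of the observed graphs, so the spread of the distribution of $\mathbf{\hat{S}}$ is governed by how consistent the views are. Then I would invoke Theorem~\ref{theorem2}: for each view the cross-entropy $H(\mathbf{A}^v,\mathbf{\hat{S}})$ is proportional to $b^v$, which by construction (Section~\ref{SecMVFusion}) quantifies the relevance of the $v$-th view to the clustering task, so the \emph{location} of the distribution — how close $\mathbf{\hat{S}}$ sits to each $\mathbf{A}^v$ — is governed by task relevance. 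Combining the two observations, the entropy of $\mathbf{\hat{S}}$ responds to consistency while its cross-entropies respond to task relevance; since these families of quantities are precisely what $\boldsymbol{\beta}$ controls (and is controlled by), the distribution of $\mathbf{\hat{S}}$ depends on both, which is the assertion of the corollary.

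The main obstacle is that the statement is qualitative, so the real care goes into making ``dependent on'' precise without over-claiming: entropy together with a single cross-entropy does not in general pin down a distribution uniquely. I would handle this either by restricting the consistency half of the argument to the two-view, equal-belief regime of Theorem~\ref{theorem1}, where the relationship is a genuine monotone bound, or — more cleanly — by emphasizing that the true carrier of the dependence is $\boldsymbol{\beta}$ itself, available in the closed form of Eq.~\eqref{eqBeta} as an explicit function of $\{\mathbf{A}^v\}$ and $\{b^v\}$, with Theorems~\ref{theorem1} and~\ref{theorem2} serving to identify the two interpretable effects this closed form produces. I would also flag explicitly that $q_{\phi'}$ additionally depends on the global features $\mathbf{\overline{X}}$ through the inference network, so the corollary is properly read as a statement about the prior $p(\mathbf{\hat{S}})$ and the ELBO-induced constraint on the posterior, rather than about every detail of the trained posterior.
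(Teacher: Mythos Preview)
Your proposal is correct and follows the same route as the paper: the paper states Corollary~\ref{corollary1} as an immediate consequence of Theorems~\ref{theorem1} and~\ref{theorem2} with no further argument, so your derivation --- invoking Theorem~\ref{theorem1} for the consistency channel and Theorem~\ref{theorem2} for the task-relevance channel --- is exactly what the authors intend. If anything, your treatment is more careful than the paper's, since you explicitly ground the dependence in the closed form of $\boldsymbol{\beta}$ from Eq.~\eqref{eqBeta} and flag the prior-versus-posterior distinction, whereas the paper simply asserts the corollary in one line.
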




\subsubsection{Variational Graph Information Bottleneck}
\label{secIBvar}
To understand how the variational graph generator infers a graph with sufficient information, we build a connection between the objective function of the variational graph generator (Section~\ref{secVGG}) and Information Bottleneck (IB) principle~\cite{DVIB,LRusingIB}. Recall the definition of supervised IB~\cite{IB}:
\begin{definition}[Supervised IB]
 The supervised IB is to maximize the Information Bottleneck Lagrangian:
\begin{equation}
    \mathcal{IB}_{sup}= I(Y,Z_X) - \omega I(X, Z_X), \text{where }\omega > 0.
\end{equation}
\end{definition}
This shows that the supervised IB aims to maximize the mutual information between latent representation $Z_X$ and corresponding target labels $Y$, meanwhile trying to compress more information from $X$ (\emph{i.e.}, minimize the mutual information between $X$ and $Z_X$). The intuition behind IB is that the latent variable $Z_X$ tries to collect less but sufficient information from $X$ to facilitate the task.

Recall the proposed method, VGMGC aims at inferring the graph $\mathbf{\hat{S}}$ from global features $\mathbf{\overline{X}}$ in the inference process, and generating the observed graphs $\mathbf{A}^v$ in the generative process. Intuitively, only the topological information in $\mathbf{\overline{X}}$ is useful and most others are redundant. Therefore, we can obtain our variational graph IB:
\begin{definition}[Variational graph IB]
\label{def2}
The IB in the variational graph generator is:
\begin{equation}
    \mathcal{IB}_{graph} = \sum\nolimits_{v} I(\mathbf{A}^v, \mathbf{\hat{S}}) - \omega I(\mathbf{\hat{S}}, \mathbf{\overline{X}}), \text{where }\omega > 0.
\end{equation}
\end{definition} 
\begin{theorem}
\label{theorem3}
Maximizing the ELBO (Eq.~\eqref{eqELBO}) is equivalent to maximizing variational graph IB:
\begin{equation}
\label{eqLeIB}
    \arg\max \mathcal{L}_{E} \cong \arg\max \mathcal{IB}_{graph}.
\end{equation}
\end{theorem}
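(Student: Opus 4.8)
The plan is to prove Theorem~\ref{theorem3} by the standard variational treatment of the information bottleneck~\cite{DVIB}: I would show that each of the two mutual-information terms in $\mathcal{IB}_{graph}$ is controlled by exactly one term of the ELBO $\mathcal{L}_E$ of Eq.~\eqref{eqELBO}, up to additive constants that do not involve the trainable parameters $\phi'$ and $\{\xi^v\}_{v=1}^V$. First I would expand the relevance term as $I(\mathbf{A}^v,\mathbf{\hat{S}}) = H(\mathbf{A}^v) - H(\mathbf{A}^v\mid\mathbf{\hat{S}})$ and note that $H(\mathbf{A}^v)$ is a data-determined constant, so maximizing $\sum_v I(\mathbf{A}^v,\mathbf{\hat{S}})$ over the model parameters coincides with maximizing $\sum_v \mathbb{E}_{p(\mathbf{A}^v,\mathbf{\hat{S}})}[\log p(\mathbf{A}^v\mid\mathbf{\hat{S}})]$.

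Since the true conditional $p(\mathbf{A}^v\mid\mathbf{\hat{S}})$ is intractable, I would next introduce the decoder $p_{\xi^v}(\mathbf{A}^v\mid\mathbf{\hat{S}},\mathbf{X}^v)$ of Eq.~\eqref{eqGen} as a variational approximation and invoke non-negativity of an appropriate (expected) KL divergence, $\mathbb{E}_{p(\mathbf{\hat{S}},\mathbf{X}^v)}\big[KL\big(p(\mathbf{A}^v\mid\mathbf{\hat{S}},\mathbf{X}^v)\,\|\,p_{\xi^v}(\mathbf{A}^v\mid\mathbf{\hat{S}},\mathbf{X}^v)\big)\big]\ge 0$, to obtain the lower bound $-H(\mathbf{A}^v\mid\mathbf{\hat{S}}) \ge \mathbb{E}_{q_{\phi'}(\mathbf{\hat{S}}\mid\mathbf{\overline{X}})p(\mathbf{X}^v)}[\log p_{\xi^v}(\mathbf{A}^v\mid\mathbf{\hat{S}},\mathbf{X}^v)] + \mathrm{const}$. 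Here the conditional-independence facts read off the Bayesian net of Figure~\ref{fig:BayesianNet} (notably $\mathbf{\hat{S}}\perp\mathbf{X}^v$, and that the generative law of $\mathbf{\hat{S}}$ may be replaced by the approximate posterior $q_{\phi'}$) are what let me rewrite the sampling measure as $q_{\phi'}(\mathbf{\hat{S}}\mid\mathbf{\overline{X}})p(\mathbf{X}^v)$ and add the harmless extra conditioning on the observed $\mathbf{X}^v$. This reproduces exactly the expected-log-likelihood term of $\mathcal{L}_E$. For the compression term I would use the identity $I(\mathbf{\hat{S}},\mathbf{\overline{X}}) = \mathbb{E}_{p(\mathbf{\overline{X}})}\big[KL\big(q_{\phi'}(\mathbf{\hat{S}}\mid\mathbf{\overline{X}})\,\|\,q_{\phi'}(\mathbf{\hat{S}})\big)\big]$, where $q_{\phi'}(\mathbf{\hat{S}})=\mathbb{E}_{p(\mathbf{\overline{X}})}[q_{\phi'}(\mathbf{\hat{S}}\mid\mathbf{\overline{X}})]$ is the aggregated posterior, and replace the intractable aggregated posterior by the prior $p(\mathbf{\hat{S}})$ of Section~\ref{secAssPrior}; since the resulting gap is $KL(q_{\phi'}(\mathbf{\hat{S}})\,\|\,p(\mathbf{\hat{S}}))\ge 0$, this gives $I(\mathbf{\hat{S}},\mathbf{\overline{X}}) \le \mathbb{E}_{p(\mathbf{\overline{X}})}[KL(q_{\phi'}(\mathbf{\hat{S}}\mid\mathbf{\overline{X}})\,\|\,p(\mathbf{\hat{S}}))]$, i.e. the KL term of $\mathcal{L}_E$.

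Combining the two bounds with the IB weight specialized to $\omega=1$ then yields $\mathcal{IB}_{graph}\ge \mathcal{L}_E + c$ for a constant $c$ independent of the optimized parameters; hence $\mathcal{L}_E$ is a variational surrogate that lower-bounds $\mathcal{IB}_{graph}$ and, being tight in the same variational family, induces the same maximizers, which is the asserted $\arg\max\mathcal{L}_E \cong \arg\max\mathcal{IB}_{graph}$.

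The main obstacle I anticipate is the second step: making rigorous the insertion of the $\mathbf{X}^v$-conditioning into the variational decoder and the corresponding change of the expectation measure, since a naive bound on $I(\mathbf{A}^v,\mathbf{\hat{S}})$ only licenses a decoder of the form $p_\xi(\mathbf{A}^v\mid\mathbf{\hat{S}})$, whereas the ELBO uses $p_{\xi^v}(\mathbf{A}^v\mid\mathbf{\hat{S}},\mathbf{X}^v)$. I would address this by working instead with $I(\mathbf{A}^v,(\mathbf{\hat{S}},\mathbf{X}^v))$ and carefully tracking that, because $\mathbf{X}^v$ is a held-fixed observed input, conditioning the decoder on it only enlarges the reconstruction term and leaves the additive entropy constant intact, so the inequality still lower-bounds the object of interest; and by reading ``$\cong$'' as equality of argmax (which also absorbs the freedom in choosing $\omega$) rather than literal numerical equality.
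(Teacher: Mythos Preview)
Your proposal is correct and follows the same overall strategy as the paper: lower-bound the relevance term $\sum_v I(\mathbf{A}^v,\mathbf{\hat{S}})$ by the expected-log-likelihood term of $\mathcal{L}_E$ via a variational decoder and non-negativity of KL, and upper-bound the compression term $I(\mathbf{\hat{S}},\mathbf{\overline{X}})$ by the KL-to-prior term by swapping the aggregated posterior for the prior $p(\mathbf{\hat{S}})$. The second bound is derived identically in both. The one genuine difference is exactly the obstacle you flagged. The paper does \emph{not} enlarge to $I(\mathbf{A}^v,(\mathbf{\hat{S}},\mathbf{X}^v))$; instead it passes to the \emph{conditional} mutual information $I(\mathbf{A}^v;\mathbf{\hat{S}}\mid\mathbf{\overline{X}},\mathbf{X}^v)$, uses the identity $I(X;Y\mid Z)=I(X;Y)+H(Z\mid X)+H(Z\mid Y)-H(Z\mid X,Y)-H(Z)$, and argues that because $\mathbf{\overline{X}},\mathbf{X}^v$ are observed their entropy terms are constants, so $I(\mathbf{A}^v;\mathbf{\hat{S}})\propto I(\mathbf{A}^v;\mathbf{\hat{S}}\mid\mathbf{\overline{X}},\mathbf{X}^v)$. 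It then lower-bounds the conditional MI directly, which makes the $\mathbf{X}^v$ in the decoder $p_{\xi^v}(\mathbf{A}^v\mid\mathbf{\hat{S}},\mathbf{X}^v)$ appear naturally from the conditioning and lets the Bayesian-net factorizations of Figure~\ref{fig:BayesianNet} be invoked cleanly inside the integral. Your route via ``held-fixed observed input'' is more elementary and avoids the somewhat informal proportionality step; the paper's route buys a mechanical derivation where every substitution is licensed by a conditional-independence read off the graphical model.
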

\begin{proof}[\textbf{{Proof}}]
Here, $\mathcal{IB}_{graph}$ is defined in Definition~\ref{def2}.
For simplification, we rewrite mutual information as $I(\cdot; \cdot)$; $\mathbf{A}$, $\mathbf{\overline{X}}$, $\mathbf{X}^v$ and $\mathbf{\hat{S}}$ as $A$, $\overline{X}$, $X^v$ and $\hat{S}$.
\\
Recall the property of mutual information:
\begin{equation}
\nonumber
\begin{aligned}
I(X;Y \mid Z) &= I(X;Y) + H(Z \mid X) + H(Z \mid Y)\\
&- H(Z\mid X,Y) - H(Z),
\end{aligned}
\end{equation}
where, $H(\cdot)$ denotes Entropy. From this property, we have:
\begin{equation}
\nonumber
\small
\begin{aligned}
    &I(A^v; \hat{S}\mid \overline{X}, X^v)
    = I(A^v; \hat{S}) + H(\overline{X},X^v \mid \hat{S})\\
    & \quad + H(\overline{X},X^v\mid A^v)  - H(\overline{X},X^v \mid A^v,\hat{S}) - H(\overline{X}, X^v).
\end{aligned}
\end{equation}
Note that ${\overline{X}}$ and ${X}^v$ are observed, so $H({\overline{X}})$ and $H({X}^v)$ are two constants. We have:
$$
I(A^v; \hat{S}) \propto I(A^v; \hat{S}\mid \overline{X}, X^v).
$$

Let $p_D(a^v)$ denote data distribution of $a^v$, from the Bayesian network (Figure~\ref{fig:BayesianNet}), we have the following derivation:
\begin{equation}
\small
\begin{aligned}
\label{eq:theorem2.1}
  &I(A^v; \hat{S}\mid \overline{X}, X^v)\\
  &= \mathbb{E}_{p(\overline{x},x^v)}\int\int p(a^v, \hat{s}\mid \overline{x}, x^v)\log{\frac{p(a^v, \hat{s} \mid \overline{x}, x^v)}{p_D(a^v \mid \overline{x}, x^v) q(\hat{s} \mid \overline{x}, x^v)}} d\hat{s} da^v\\
  &=\mathbb{E}_{p(\overline{x})p(x^v)}\int\int p_D(a^v)q(\hat{s}\mid \overline{x})\log{\frac{p(a^v \mid \hat{s}, x^v)}{p_D(a^v)}} d\hat{s} da^v\\
  &=\mathbb{E}_{p(\overline{x})p(x^v)}\int\int p_D(a^v)q(\hat{s}\mid \overline{x})\log{p(a^v \mid \hat{s}, x^v)} d\hat{s}da^v\\
  &\quad \quad+ H(A^v)\\
  &\geq \mathbb{E}_{p(\overline{x})p(x^v)}\int\int p_D(a^v)q(\hat{s}\mid \overline{x})\log{p(a^v \mid \hat{s}, x^v)} d\hat{s}da^v\\
  &\geq \mathbb{E}_{p(\overline{x})p(x^v)} \mathbb{E}_ {q(\hat{s}\mid \overline{x})}\log{p_{\xi^v}(a^v \mid \hat{s}, x^v)},
\end{aligned}
\end{equation}
where $p_{\xi^v}(a^v \mid \hat{s}, x^v)$ is a variational approximation of $p(a^v \mid \hat{s}, x^v)$. The last term of Eq.~\eqref{eq:theorem2.1} is deduced from $KL(p(a^v \mid \hat{s}, x^v)\Vert p_{\xi^v}(a^v \mid \hat{s}, x^v))\geq 0$.
Therefore, we have obtained the first term in $\mathcal{L}_E$ from Eq.~\eqref{eq:theorem2.1}. For the second term, we have:
\begin{equation}
\begin{aligned}
\label{eq:theorem2.2}
    I(\hat{S}, \overline{X}) 
    &= \int \int q(\hat{s},\overline{x}) \log{\frac{q(\hat{s},\overline{x})}{q(\hat{s})p(\overline{x})}} d\overline{x} d\hat{s}\\
    &=\mathbb{E}_{p(\overline{x})}\int q(\hat{s} \mid \overline{x})\log{\frac{q(\hat{s}\mid \overline{x})}{q(\hat{s})} \frac{p(\hat{s})}{p(\hat{s})}} d\hat{s}\\
    &\leq \mathbb{E}_{p(\overline{x})}KL\left(q(\hat{s}\mid \overline{x})\Vert p(\hat{s})\right).
\end{aligned}
\end{equation}
Here, $p(\hat{s})$ and $p(\overline{x})$ denote the prior distribution of $\hat{s}$ and data distribution of $\overline{x}$, and $q(\cdot)$ denotes the distribution can be approximated. This upper bound is the second term in $\mathcal{L}_E$.
So, Theorem~\ref{theorem3} can be obtained by concluding the above formulas:
\begin{equation}
\nonumber
\small
\begin{aligned}
    \arg\max I(A^v,\hat{S}) - \omega I(\hat{S}, \overline{X}) &\cong \arg\max  \mathbb{E}_ {q(\hat{s}\mid \overline{x})}\log{p_{\xi^v}(a^v \mid \hat{s}, x^v)}\\
    &\quad- \omega KL\left(q(\hat{s}\mid \overline{x})\Vert p(\hat{s})\right)\\
    \Rightarrow  \arg\max \mathcal{IB}_{graph} &\cong  \arg\max \mathcal{L}_{E}.
\end{aligned}
\end{equation}
\end{proof}
Eq.~\eqref{eqLeIB} indicates that by optimizing Eq.~\eqref{eqELBO}, the consensus graph $\mathbf{\hat{S}}$ can capture the most useful topological information and discard redundant information from global feature $\mathbf{\overline{X}}$.

\subsubsection{Lower Bound of Mutual Information}
\label{secLBMI}
Following \cite{MINE,VBMI}, we analyze the lower bound of mutual information to optimize $I(\mathbf{Z}^v,\mathbf{X}^v)$ and $I(\boldsymbol{\alpha}^v,\mathbf{\overline{X}})$.
\begin{definition}
 The mutual information $I(\mathbf{Z}^v,\mathbf{X}^v)$ is:
\begin{equation}
\begin{aligned}
\label{eqDefIM}
    I(\mathbf{Z}^v, \mathbf{X}^v) 
    &= \mathbb{E}_{p(\mathbf{z}^v, \mathbf{x}^v)} \log{\frac{p(\mathbf{z}^v,\mathbf{x}^v)}{p(\mathbf{z}^v)p(\mathbf{x}^v)}}.
\end{aligned}
\end{equation}
\end{definition}
Considering that the entropy $H(\cdot)$ is always positive, we have:
\begin{equation}
\nonumber
\begin{aligned}
    &I(\mathbf{Z}^v, \mathbf{X}^v) \\
    &= \int \int p(\mathbf{z}^v, \mathbf{x}^v)\log{\frac{p(\mathbf{x}^v \mid \mathbf{z}^v)}{p(\mathbf{x}^v)}}d\mathbf{x}^v d\mathbf{z}^v\\
    &= \int\int p(\mathbf{x}^v)p(\mathbf{z}^v \mid \mathbf{x}^v)\log{p(\mathbf{x}^v \mid \mathbf{z}^v)} d\mathbf{x}^v d\mathbf{z}^v \\
    &\quad \quad + \int\int p(\mathbf{x}^v)p(\mathbf{z}^v \mid \mathbf{x}^v) \log{\frac{1}{p(\mathbf{x}^v)}} d\mathbf{x}^v d\mathbf{z}^v\\
    &= \int\int p(\mathbf{x}^v)p(\mathbf{z}^v \mid \mathbf{x}^v)\log{p(\mathbf{x}^v \mid \mathbf{z}^v)} d\mathbf{x}^v d\mathbf{z}^v + H(\mathbf{x}^v)\\
    &\geq \int \int p(\mathbf{x}^v)p(\mathbf{z}^v \mid \mathbf{x}^v)\log{p(\mathbf{x}^v \mid \mathbf{z}^v)} d\mathbf{x}^v d\mathbf{z}^v.
\end{aligned}
\end{equation}
Here, because $p(\mathbf{x}^v \mid \mathbf{z}^v)$ is intractable, $q_{\eta^v}(\mathbf{x}^v \mid \mathbf{z}^v)$ is introduced as a variational approximation of $p(\mathbf{x}^v \mid \mathbf{z}^v)$. Considering that KL divergence is always positive, i.e., $KL(p(\mathbf{x}^v \mid \mathbf{z}^v) \Vert q_{\eta^v}(\mathbf{x}^v \mid \mathbf{z}^v)) \geq 0$, we can obtain the lower bound of mutual information:
\begin{equation}
\nonumber
\begin{aligned}
&\int \int p(\mathbf{x}^v)p(\mathbf{z}^v \mid \mathbf{x}^v)\log{p(\mathbf{x}^v \mid \mathbf{z}^v)} d\mathbf{x}^v d\mathbf{z}^v \\
&= \int \int p(\mathbf{x}^v)p(\mathbf{z}^v \mid \mathbf{x}^v)\log{\frac{p(\mathbf{x}^v \mid \mathbf{z}^v)}{q_{\eta^v}(\mathbf{x}^v \mid \mathbf{z}^v)}q_{\eta^v}(\mathbf{x}^v \mid \mathbf{z}^v)} d\mathbf{x}^v d\mathbf{z}^v\\
&= \int \int p(\mathbf{z}^v)KL(p(\mathbf{x}^v \mid \mathbf{z}^v) \Vert q_{\eta^v}(\mathbf{x}^v \mid \mathbf{z}^v)) d\mathbf{x}^v d\mathbf{z}^v\\
&\quad \quad + \int\int p(\mathbf{x}^v)p(\mathbf{z}^v \mid \mathbf{x}^v)\log{q_{\eta^v}(\mathbf{x}^v \mid \mathbf{z}^v)}d\mathbf{x}^v d\mathbf{z}^v\\
& \geq \int\int p(\mathbf{x}^v)p(\mathbf{z}^v \mid \mathbf{x}^v)\log{q_{\eta^v}(\mathbf{x}^v \mid \mathbf{z}^v)}d\mathbf{x}^v d\mathbf{z}^v\\
&= \mathbb{E}_{ p(\mathbf{x}^v)p(\mathbf{z}^v \mid \mathbf{x}^v)}\log{q_{\eta^v}(\mathbf{x}^v \mid \mathbf{z}^v)}.
\end{aligned}
\end{equation}

Therefore, the mutual information here can be maximized by maximizing its lower bound:
\begin{equation}
    \arg\max I(\mathbf{Z}^v, \mathbf{X}^v) \cong \arg\max \mathbb{E}_{ p(\mathbf{x}^v)p(\mathbf{z}^v \mid \mathbf{x}^v)}\log{q_{\eta^v}(\mathbf{x}^v \mid \mathbf{z}^v)}.
\end{equation}
In our case, $x_j^v$ is subordinated to Bernoulli distribution, so $-\mathbb{E}_{p(\mathbf{z}^v \mid \mathbf{x}^v)p(\mathbf{x}^v)}\log{q_{\eta^v}(\mathbf{x}^v \mid \mathbf{z}^v)}$ can be instantiated by a binary cross entropy loss between $\mathbf{x}^v$ and its reconstruction $\mathbf{\breve{x}}^v$. The lower bound of all $I(\mathbf{Z}^v,\mathbf{X}^v)$ and $I(\boldsymbol{\alpha},\mathbf{\overline{X}})$ are added together, written as $\mathcal{L}_{r}$.
\begin{table}[]
    \centering
    \caption{The statistics information of experimental datasets.}
    \begin{tabular}{lllll}
    \toprule[1.5pt]
        Dataset & \#Clusters & \#Nodes & \#Features & \makecell[l]{Graphs} \\
        \midrule
        \multirow{2}*{ACM} & \multirow{2}*{3} & \multirow{2}*{3025} & \multirow{2}*{1830} & $\mathcal{G}^1$ co-paper \\
         & & & & $\mathcal{G}^2$ co-subject \\
         \midrule
         \multirow{3}*{DBLP} & \multirow{3}*{4} & \multirow{3}*{4057} & \multirow{3}*{334} & $\mathcal{G}^1$ co-author \\
         & & & & $\mathcal{G}^2$ co-conference \\
         & & & & $\mathcal{G}^3$ co-term \\
         \midrule
         \multirow{2}*{Photos} & \multirow{2}*{8} & \multirow{2}*{7487} & 745 & \multirow{2}*{$\mathcal{G}^1$ co-purchase} \\
          & & & 7487 &  \\
         \midrule
         \multirow{2}*{Computers} & \multirow{2}*{10} & \multirow{2}*{13381} & 767 & \multirow{2}*{$\mathcal{G}^1$ co-purchase} \\
          & & & 13381 &  \\
          \midrule
          Cora & 7 & 2708   & 1433  & $\mathcal{G}^1$ citation network\\
          \midrule
          Citeseer & 6 & 3327   & 3703  & $\mathcal{G}^1$ citation network\\
          \midrule
          \multirow{2}*{BBC Sport} & \multirow{2}*{5} & \multirow{2}*{544} & 3283 &$\mathcal{G}^1$ kNN graph\\
          & & & 3183  & $\mathcal{G}^2$ kNN graph \\
          \midrule
          \multirow{3}*{3Sources} &\multirow{3}*{6} & \multirow{3}*{169} & 3560 & $\mathcal{G}^1$ kNN graph\\
          &  &  & 3631 & $\mathcal{G}^2$ kNN graph\\
          &  &  & 3068  & $\mathcal{G}^3$ kNN graph\\
         \bottomrule[1.5pt]
    \end{tabular}
    \label{tab:datasets}
\end{table}
\begin{table}[t]
\setlength\tabcolsep{2.4pt}
    \centering
    \caption{The overall clustering results.
        The values in the table are shown as percentage. The best and second-best results are shown in bold and underlined, and `$-$' denotes the results not reported in the original papers.
        }
    \begin{tabular}{r|cccc|cccc}
    \toprule[1.5pt]
    \multirow{2}*{Methods / Datasets} & \multicolumn{4}{c|}{Amazon photos} & \multicolumn{4}{c}{Amazon computers} \\
         & NMI & ARI & ACC & F1 & NMI & ARI & ACC & F1 \\
    \midrule
     MAGCN \cite{MAGCN} (2020) & 39.0 & 24.0 & 51.7 & 47.3 & $-$ & $-$ & $-$ & $-$ \\
    COMPLETER \cite{lin2021completer} (2021) & 26.1 & 7.6 & 36.8 & 30.7 & 15.6 & 5.4 & 24.2 & 16.0 \\
    MVGRL \cite{hassani2020mvgrl} (2021) & 43.3 & 23.8 & 50.5 & 46.0 & 10.1 & 5.5 & 24.5 & 17.1 \\
    MvAGC \cite{lin2021graph} (2021) & 52.4 & 39.7 & 67.8 & 64.0 & 39.6 & 32.2 & 58.0 & 41.2 \\
    MCGC \cite{pan2021multi} (2021) & \underline{61.5} & \underline{43.2} & \underline{71.6} & \underline{68.6} & \underline{53.2} & \underline{39.0} & \underline{59.7} & \textbf{52.0} \\
    \midrule
    \textbf{VGMGC (ours)} & \textbf{66.8} & \textbf{58.4} & \textbf{78.5} & \textbf{76.9} & \textbf{53.5} & \textbf{47.5} & \textbf{62.2} & \underline{50.2} \\
    \midrule[1pt]
    \multirow{1}*{Methods / Datasets} & \multicolumn{4}{c|}{ACM} & \multicolumn{4}{c}{DBLP} \\
    \midrule
    RMSC \cite{xia14RMSC} (2014) & 39.7 & 33.1 & 63.2 & 57.5 & 71.1 & 76.5 & 89.9 & 82.5 \\
    LINE \cite{LINE} (2015) & 39.4 & 34.3 & 64.8 & 65.9 & 66.8 & 69.9 & 86.9 & 85.5 \\
    GAE \cite{GAE} (2016) & 49.1 & 54.4 & 82.2 & 82.3 & 69.3 & 74.1 & 88.6 & 87.4 \\
    PMNE \cite{PMNE} (2017) & 46.5 & 43.0 & 69.4 & 69.6 & 59.1 & 52.7 & 79.3 & 79.7 \\
    SwMC \cite{nie2017SwMC} (2017) & 8.4 & 4.0 & 41.6 & 47.1 & 37.6 & 38.0 & 65.4 & 56.0 \\
    MNE \cite{zhan2018MNE} (2018) & 30.0 & 24.9 & 63.7 & 64.8 & $-$ & $-$ & $-$ & $-$ \\
    O2MAC \cite{o2multi} (2020) & 69.2 & 73.9 & 90.4 & 90.5 & 72.9 & 77.8 & 90.7 & 90.1 \\
    MvAGC \cite{lin2021graph} (2021) & 67.4 & 72.1 & 89.8 & 89.9 & 77.2 & 82.8 & 92.8 & 92.3 \\
    MCGC \cite{pan2021multi} (2021) & 71.3 & 76.3 & 91.5 & 91.6 & \textbf{83.0} & 77.5 & 93.0 & 92.5 \\
    DuaLGR \cite{dulgr} (2023)& 73.2 & 79.4 & 92.7 & 92.7 & 75.5 & 81.7 &  92.4 & 91.8 \\
    R\textsuperscript{2}FGC \cite{R2FGC} (2023)& 72.4 & 78.7 & 92.4 & 92.5 & 50.8 & 56.3 & 81.0 & 80.5\\
    BTGF \cite{BTGF} (2024)& \underline{75.8} & \underline{80.9} & \underline{93.2} & \underline{93.3} & 62.4 & 59.7 & 83.1 & 83.8 \\
    DIAGC \cite{DIAGC} (2024)& 71.6 & 77.0 & 91.8 & 91.8 & \underline{78.3} & \textbf{83.7} &  \textbf{93.3} & \textbf{92.8} \\
    \midrule
    \textbf{VGMGC (ours)}& \textbf{76.3} & \textbf{81.9} & \textbf{93.6} & \textbf{93.6} & \underline{78.3} & \textbf{83.7} & \underline{93.2} & \underline{92.7} \\
    \bottomrule[1.5pt]
    \end{tabular}
    \label{tab:overall_results}
\end{table}
\section{Experiments}
\label{secExp}
\subsection{Experimental Settings}
\subsubsection{Datasets}
Four common benchmark multi-view graph datasets, \emph{i.e.}, ACM, DBLP, Amazon photos, and Amazon computers, two traditional single-view graph datasets Cora and Citeseer, and two traditional multi-view datasets BBC Sport and 3Sources, are selected to evaluate the proposed VGMGC. The statistics information of these datasets is listed in Table~\ref{tab:datasets}. Specifically,
\begin{itemize}
    \item ACM\footnote{\url{https://dl.acm.org/}}: This is a paper network from the ACM database. It consists of two different graphs, co-paper and co-subject, representing the relationship of the same author with papers and the relationship of the same subject with papers respectively. The node feature is a bag-of-words representation of each paper's keywords.
    \item DBLP\footnote{\url{https://dblp.uni-trier.de/}}: This is an author network from the DBLP database. The dataset includes three graphs, \emph{i.e.} co-author (co-authors for the same paper), co-conference (papers published on the same conference) and co-term (papers published on the same term). The node feature is a bag-of-words representation of each author's keywords.
    \item Amazon datasets: Amazon photos and Amazon computers are the subsets of the Amazon co-purchase network from~\cite{pan2021multi}. The network means the two goods that are purchased together. The node feature is a bag-of-words representation of each good's product reviews. Due to Amazon photos and Amazon computers only having single-view features, we constructed the second view features through Cartesian product following by \cite{MAGCN}.  
    \item Cora\footnote{\url{https://graphsandnetworks.com/the-cora-dataset/}} and Citeseer\footnote{\url{https://deepai.org/dataset/citeseer}}: Cora and Citeseer are two classical single-view graph datasets. Each node represents a paper with features extracted from the bag-of-word of the paper. The network is a paper citation network.
    
    \item BBC Sport\footnote{\url{http://mlg.ucd.ie/datasets/bbc.html}}: BBC Sport is a widely-used traditional multi-view dataset. Each node is a document from one of five classes with two features extracted from the document. Two graphs are constructed from features in SDSNE~\cite{SDSNE}.
    \item 3Sources\footnote{\url{http://mlg.ucd.ie/datasets/3sources.html}}: 3Sources is a traditional multi-view dataset, containing 169 reports reported in different three sources. Each node represents a report with three features extracted from three sources respectively. The graphs are constructed from features in SDSNE~\cite{SDSNE}.
\end{itemize}
\begin{table}[]
\setlength\tabcolsep{2.4pt}
    \centering
    \caption{Clustering results on two single-view graph datasets (Cora and Citeseer) and two traditional multi-view datasets (BBC Sport and 3Sources). The values in the table are shown as percentage. The best and second-best results are shown in bold and underlined, and `$-$' represents the results not reported in the original papers.}
    \begin{tabular}{r|cccc|cccc}
    \toprule[1.5pt]
    \multirow{2}*{Methods / Datasets} & \multicolumn{4}{c|}{Cora} & \multicolumn{4}{c}{Citeseer} \\
         & NMI & ARI & ACC & F1 & NMI & ARI & ACC & F1 \\
    \midrule
    VGAE \cite{GAE}          (2016) & 40.8 & 34.7 & 59.2 & 45.6 & 16.3 & 10.1 & 39.2 & 27.8 \\
    ARVGE \cite{ARVGAE}      (2018) & 45.0 & 37.4 & 63.8 & 62.7 & 26.1 & 24.5 & 54.4 & 52.9 \\
    AGC \cite{AGC}           (2019) & 53.7 & 44.8 & 68.9 & 65.6 & 41.1 & 42.0 & 67.0 & 62.3 \\
    DAEGC \cite{DAEGC}       (2019) & 52.8 & \underline{49.6} & 70.4 & \underline{68.2} & 39.7 & 41.0 & 67.2 & \underline{63.6} \\
    GMM-VGAE \cite{GMM-VGAE} (2020) & 54.4 & $-$  & \underline{71.5} & 67.8 & 42.3 & $-$  & 67.4 & 63.2\\
    MAGCN \cite{MAGCN}       (2020) & \underline{55.3} & 47.6 & 71.0 & $-$  & 41.8 & 40.3 & \underline{69.8} & $-$ \\
    DNENC \cite{DNENC}       (2022) & 51.2 & 44.7 & 68.3 & $-$ & \underline{42.6} & \underline{44.9} & 69.2 & $-$ \\
    DAsNMF \cite{DAsNMF} (2024)& 36.8 & 24.4 & 47.2 & 37.6 & 16.4 & 12.1 & 40.0 & 31.7 \\
    \midrule
    \textbf{VGMGC (ours)} & \textbf{57.2} & \textbf{53.5} & \textbf{73.6} & \textbf{71.6} & \textbf{44.7} & \textbf{46.5} & \textbf{69.9} & \textbf{65.2} \\
    
    \midrule[1pt]
    
    Methods / Datasets & \multicolumn{4}{c|}{BBC Sport} & \multicolumn{4}{c}{3Sources}\\
    \midrule
    GMC \cite{GMC}        (2020) & 70.5 & 60.1 & 73.9 & 72.1 & 54.8 & 44.3 & 69.2 & 60.5 \\
    CGD \cite{CGD}        (2020) & 91.0 & 93.1 & 97.4 & 94.7 & 69.5 & 61.1 & 78.1 & 70.9 \\
    O2MAC \cite{o2multi}  (2020) & 89.1 & 90.6 & 96.4 & 96.5 & 72.7 & 75.5 & 65.0 & 66.9\\
    SDSNE \cite{SDSNE}    (2022) & \underline{94.8} & \underline{95.8} & \underline{98.5} & \underline{96.8} & \underline{84.8} & \underline{86.7} & \underline{93.5} & \underline{89.8}\\
    PGSC \cite{pgsc}      (2023) & 86.6 & 88.5 & 95.6 & 91.3 & 71.4 & 66.0 & 81.06 & 74.4\\
    TUMCR \cite{TUMCR}    (2024)& 89.0 & $-$ & 96.7 & $-$ & $-$ & $-$ & $-$ & $-$ \\
    SIWCL \cite{SIWCL}    (2024)& {91.9} & {93.0} & {97.6} & $-$ & $-$ & $-$ & $-$ & $-$ \\
    \midrule
    \textbf{VGMGC (ours)} & \textbf{96.1} & \textbf{96.9} & \textbf{98.9} & \textbf{99.0} & \textbf{85.9} & \textbf{87.6} & \textbf{94.1} & \textbf{92.7} \\
    
    \bottomrule[1.5pt]
    \end{tabular}
\label{tab:more_exp}
\end{table}
\subsubsection{Comparison Methods}
On four multi-view graph datasets, our comparison methods in Table~\ref{tab:overall_results} can be divided into three categories:
\begin{itemize}
    \item Single-view baselines: LINE \& GAE: LINE~\cite{LINE} and GAE~\cite{GAE} are two typical single-view clustering methods.
    \item Multi-view graph baselines: PMNE~\cite{PMNE}, SwMC~\cite{nie2017SwMC} and MNE~\cite{zhan2018MNE} are four classical multi-view clustering methods that try to generate final embedding for clustering. RMSC~\cite{xia14RMSC} is a robust spectral clustering model.
    \item SOTAs: O2MAC~\cite{o2multi} and MAGCN~\cite{MAGCN} are the methods that learn from both attribute features and structural information. In addition, graph filter-based MGC methods, such as MvAGC~\cite{lin2021graph} and MCGC~\cite{pan2021multi}, are included. COMPLETER~\cite{lin2021completer} and MVGRL~\cite{hassani2020mvgrl} are contrastive learning methods to learn a common representation for clustering. The latest methods, R$^2$FGC~\cite{R2FGC}, DuaLGR~\cite{dulgr}, BTGF~\cite{BTGF}, and DIAGC~\cite{DIAGC} are also included for comparison.
\end{itemize}
On other traditional two single-graph (Cora and Citeseer) and two multi-view datasets (BBC Sport and 3Scources), eight baselines dedicated to single-graph clustering and seven baselines for traditional multi-view clustering are benchmarked. Specifically:
\begin{itemize}
    \item Traditional single-graph clustering methods: VGAE~\cite{GAE}, ARVGE~\cite{ARVGAE}, and GMM-VGAE~\cite{GMM-VGAE} are three variational-based baselines; AGC~\cite{AGC} is a classical graph clustering baseline. DAEGC~\cite{DAEGC}, DNENC~\cite{DNENC}, MAGCN~\cite{MAGCN}, and DAsNMF~\cite{DAsNMF} are four latest clustering SOTAs conducted on single-view graph data.
    \item Traditional multi-view clustering methods: GMC~\cite{GMC}, CGD~\cite{CGD}, SDSNE~\cite{SDSNE} and PGSG~\cite{pgsc} are graph-based clustering methods, which are designed to learn a unified graph from multi-view data for clustering. O2MAC~\cite{o2multi} conducts GNN for multi-view clustering. TUMCR~\cite{TUMCR} and SIWCL~\cite{SIWCL} are two latest multi-view clustering methods. 
\end{itemize}

\subsubsection{Evaluations}
Following the work of the pioneers, we adopt four common metrics, \emph{i.e.}, normalized mutual information (NMI), adjusted rand index (ARI), accuracy (ACC), and F1-score (F1), as the metrics to evaluate the clustering performance of the proposed VGMGC. For comparison, we collect the best-reported performances from the literature. More implementation details can be found in Appendix~\ref{appImpleDetail}.

\subsection{Overall Results}
Table \ref{tab:overall_results} shows the clustering performance of the proposed VGMGC and comparison methods on four multi-graph datasets. Intuitively, VGMGC outperforms the best SOTA, \emph{i.e.}, MCGC, with ACC boosting 6.9\%, 2.5\%, on the two Amazon datasets respectively. Moreover, it shows an overwhelming advantage with respect to ACM and competitive results on DBLP. In all, with the help of the generated variational consensus graph, the proposed VGMGC achieves competitive performance compared with the existing MGC SOTAs. 

\subsection{Generalization to Other Datasets}
In this section, we explore how well VGMGC performed on the other two types of datasets, \emph{i.e.}, single-view graph datasets Cora and Citeseer, and traditional multi-view datasets 3Sources and BBC Sport. 

In Table~\ref{tab:more_exp}, it is clear that VGMGC outperforms the previous SOTA methods on both single-view graph datasets and traditional multi-view datasets. 
Specifically, we can see that the F1 score of VGMGC increases by 2.1\% on Cora and 2.9\% on 3Sources compared to the previous graph clustering and multi-view clustering baselines, GMM-VGAE and SDSNE, respectively,
which demonstrates that VGMGC can be well generalized to both single-view graph clustering and traditional multi-view clustering tasks.
\subsection{Ablation Study}\label{ablation}
\begin{table*}[]
    \centering
    \caption{The ablation study results on ACM and DBLP. The original results of VGMGC are shown in bold, and the third view $\mathcal{G}^{\it3}$ that does not exist in ACM is denoted by `*'. Values in parentheses compare the performance with the original VGMGC.}
    \label{tab:ablation}
    \begin{tabular}{l|cccc|cccc}
    \toprule[1.5pt]
         \multirow{2}*{Components} & \multicolumn{4}{c|}{ACM} & \multicolumn{4}{c}{DBLP}\\
          & NMI\% & ARI\% & ACC\% & F1\% & NMI\% & ARI\% & ACC\% & F1\%\\
        \midrule
         w/o $\mathcal{L}_{c}$ & 75.0 (-1.3) & 80.6 (-1.3) & 93.1 (-0.5) & 93.1 (-0.5) & 77.9 (-0.4) & 83.2 (-0.5) & 93.0 (-0.2) & 92.4 (-0.3)\\
         w/o $\mathcal{L}_r$  & 67.6 (-8.7) & 70.4 (-11.5) & 88.7 (-4.9) & 88.4 (-5.2) & 76.2 (-2.1) & 81.4 (-2.3) & 92.1 (-1.1) & 91.3 (-1.4) \\
         w/o $\mathbf{\hat{S}}$ \& $\mathcal{L}_{E}$  & 44.4 (-31.9) & 42.4 (-39.5) & 70.9 (-22.7) & 71.4 (-22.2) & 20.7 (-57.6) & 17.8 (-65.9) & 47.1 (-46.1) & 44.8 (-47.9)\\
         \midrule
        VGMGC-single ($\mathcal{G}^{\it1}$) & 73.1(-3.2)  & 79.6(-1.3) & 92.8(-0.8) & 92.8(-0.8)  & 47.7(-30.6) & 52.5(-31.2) & 79.0(-14.2) & 78.6(-14.1) \\
        VGMGC-single ($\mathcal{G}^{\it2}$) & 59.3(-17.0)  & 61.2(-20.7) & 84.4(-9.2) & 84.0(-9.6)  & 75.8(-2.5) & 80.9(-2.8) & 91.9(-1.3) & 91.0(-1.7) \\
        VGMGC-single ($\mathcal{G}^{\it3}$) & *     & *    & *    & *     & 15.6(-62.7) & 11.1(-72.6) & 45.9(-47.3) & 45.3(-47.4) \\
        \midrule 
         \textbf{Original}   & \textbf{76.3} & \textbf{81.9} & \textbf{93.6} & \textbf{93.6} & \textbf{78.3} & \textbf{83.7} & \textbf{93.2} & \textbf{92.7}\\
         \bottomrule[1.5pt]
    \end{tabular}
\end{table*}

We present the ablation study of VGMGC on ACM and DBLP datasets in this subsection, including the effect of each component and the performance on each single view.
\subsubsection{Effect of Each Component}
As shown in the upper part of Table~\ref{tab:ablation}, without any of the components, the performance of VGMGC is negatively affected to varying degrees. Specifically, the generated consensus graph $\mathbf{\hat{S}}$ plays an important role. Without the consensus graph and its guidance (see w/o $\mathbf{\hat{S}}$ \& $\mathcal{L}_{E}$), the dramatic drop in performance demonstrates that the variational consensus graph $\mathbf{\hat{S}}$ can mine enough global topological information.$\mathcal{L}_r$ also makes a considerable contribution to the overall performance (see w/o $\mathcal{L}_r$), as the $\mathbf{Z}^v$ of specific view can preserve useful information from $\mathbf{X}^v$. 
Furthermore, this means the latent representations $\mathbf{\overline{Z}}$ have included enough clustering information. Thus, the clustering loss $\mathcal{L}_{c}$ is helpful, while it seems to be faint (see w/o $\mathcal{L}_c$).

More specifically, $\mathcal{L}_{c}$ is the clustering loss (Eq.~\eqref{eq:KLloss}) conducting on final embeddings $\overline{\bf{Z}}$. The lower effect of $\mathcal{L}_{c}$ shown in Table~\ref{tab:ablation} implies that the better global clustering information is already obtained by the embedding $\overline{\bf{Z}}$, potentially demonstrating that the components of variational graph generator and graph encoder could effectively capture the global clustering information without clustering loss.
\begin{figure}[t]
    \includegraphics[width=0.95\linewidth]{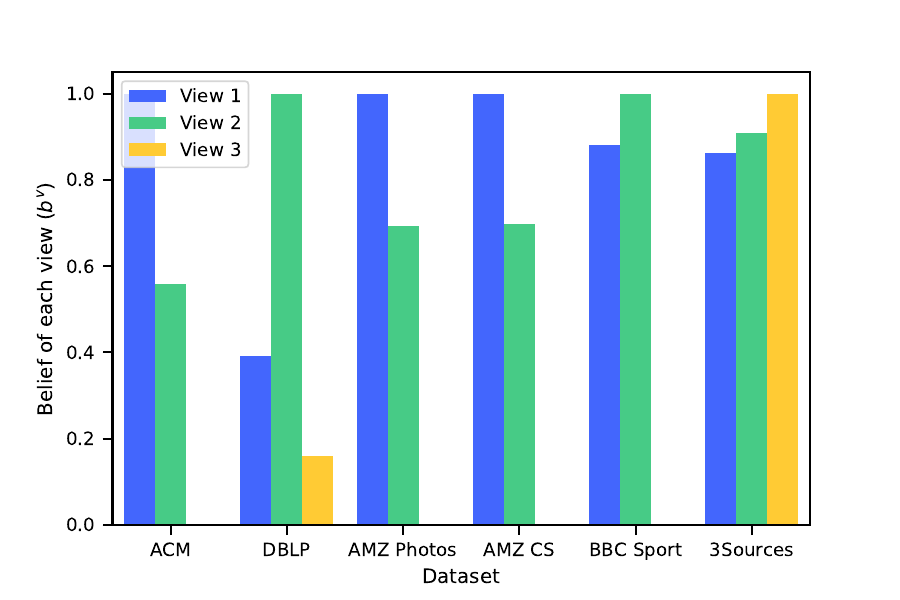}
    \caption{The learned $b^v$ on six multi-view datasets, which shows the task relevance of each view. The parameter $\rho$ is fixed to $1$ in this figure. `AMZ' means Amazon and `CS' denotes Computers.}
    \label{fig:b_barplot}
\end{figure}
\subsubsection{Performance on Each View}\label{Seceachview}
We run VGMGC on each single view to test how much effect VGMGC has gained from these multiple views helping with each other. The last three rows of Table~\ref{tab:ablation} show the results (see VGMGC-single), and Figure~\ref{fig:b_barplot} depicts the task relevance of each view ($b^v$) learned by VGMGC on six multi-view datasets. It can be seen that the clustering results of VGMGC (original) outperform all other single views, which indicates that VGMGC can take advantage of all views to obtain better clustering results. For example, on DBLP, even though the first graph and third graph perform badly as View 1 and View 3 shown in Figure~\ref{fig:b_barplot}, the NMI of VGMGC still increases by 2.5\% compared to the best single-view results (VGMGC-single ($\mathcal{G}^{\it2}$)). One possible reason for this is that the noisy part of a feature in one view is corrected by the consideration of other views in VGMGC. This can be partially demonstrated via the visualization of the clustering results of DBLP shown in Figure~\ref{fig:visview}. Comparing the visualization of $\overline{\textbf{Z}}$ and $\textbf{Z}^{\it2}$, it can be observed that the error samples in the best view ($\mathbf{Z}^{\it2}$ in Figure~\ref{fig:b_barplot}) are corrected after the combination of all views ($\mathbf{\overline{Z}}$ in Figure~\ref{fig:b_barplot}).
\begin{figure}[t]
    \centering
    \subfloat{
        \includegraphics[width=1.63in]{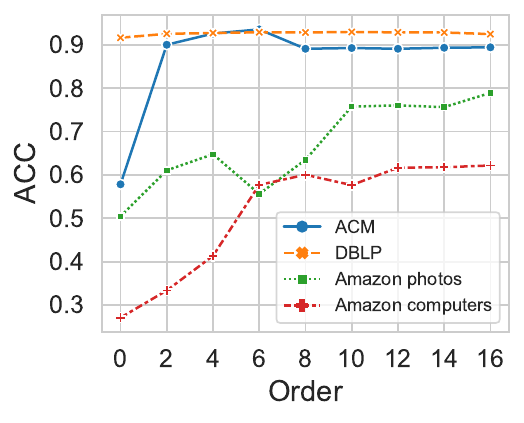}}
    \subfloat{
        \includegraphics[width=1.58in]{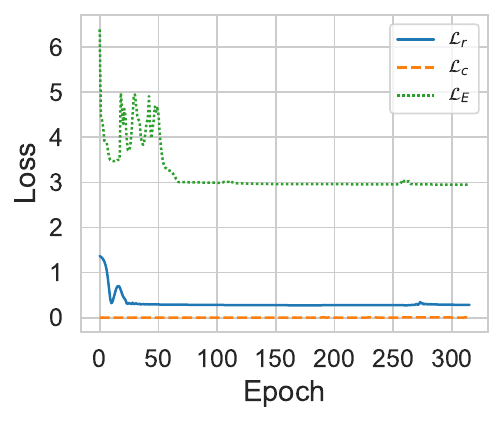}}
    \caption{The performance of VGMGC with different orders (left), and the training process on ACM (right).}
    \label{fig:smooth_loss}
\end{figure}
\subsection{Analysis and Visualization}
\subsubsection{Oversmoothing Analysis}
The left of Figure~\ref{fig:smooth_loss} shows the performance of VGMGC with different orders. Overall, the ACC increases with the increase of orders, which suggests that VGMGC can mitigate oversmoothing. This effect benefits from the elaborately designed massage-passing scheme in Section~\ref{secGraphEnc}. Notably, the plot on DBLP is steady even when the aggregation order is $0$, which might be due to that VGMGC has learned sufficient clustering-related information from original features directly before message-passing.

\subsubsection{Convergence Analysis}
The right of Figure~\ref{fig:smooth_loss} shows the trend of losses with training epochs. We can see that all losses converge after training around 100 epochs. Specifically, the value of $\mathcal{L}_E$ is large, and $\mathcal{L}_c$ is relatively small.
\begin{figure*}[t]
    \centering
    \subfloat[NMI on ACM.]{
        \includegraphics[width=1.6in, height=1.5in]{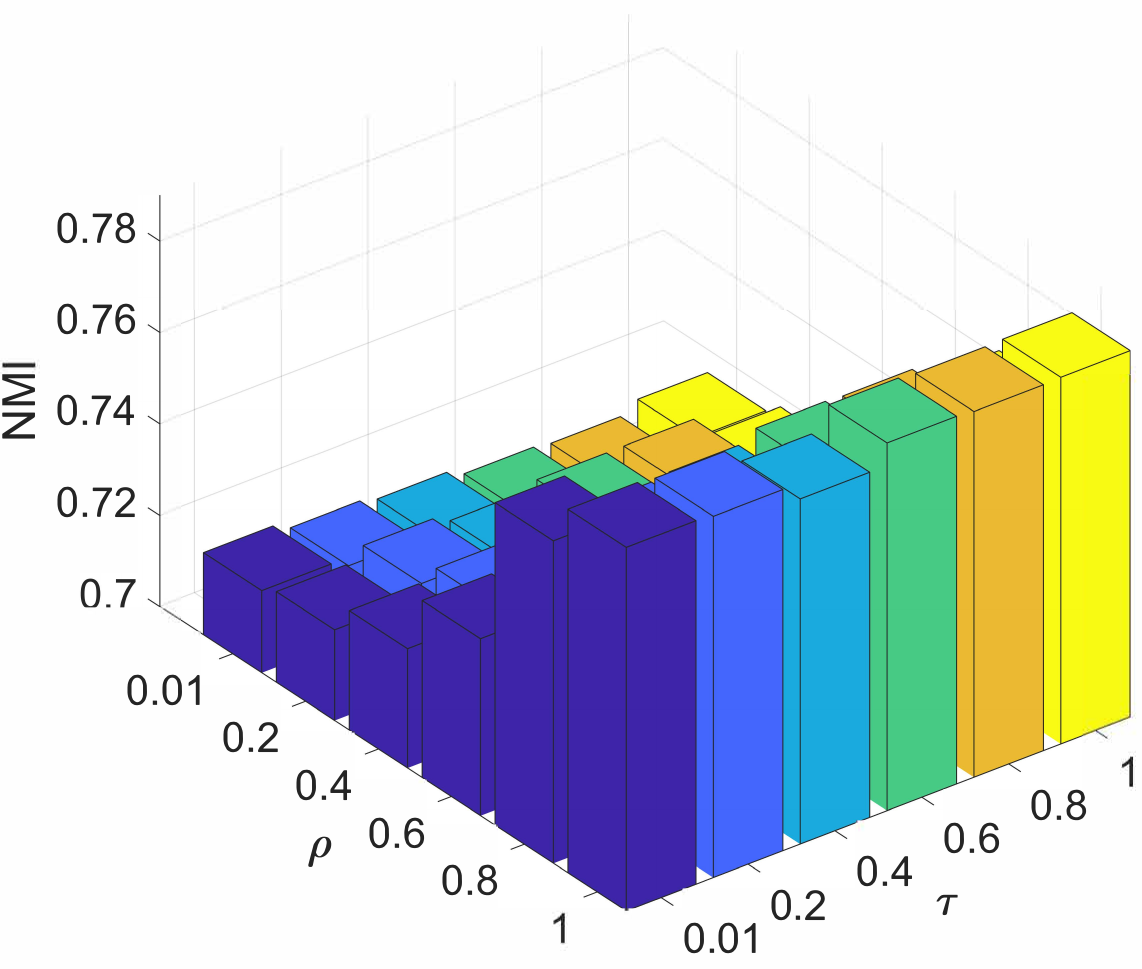}}
    \subfloat[NMI on DBLP.]{
        \includegraphics[width=1.6in, height=1.5in]{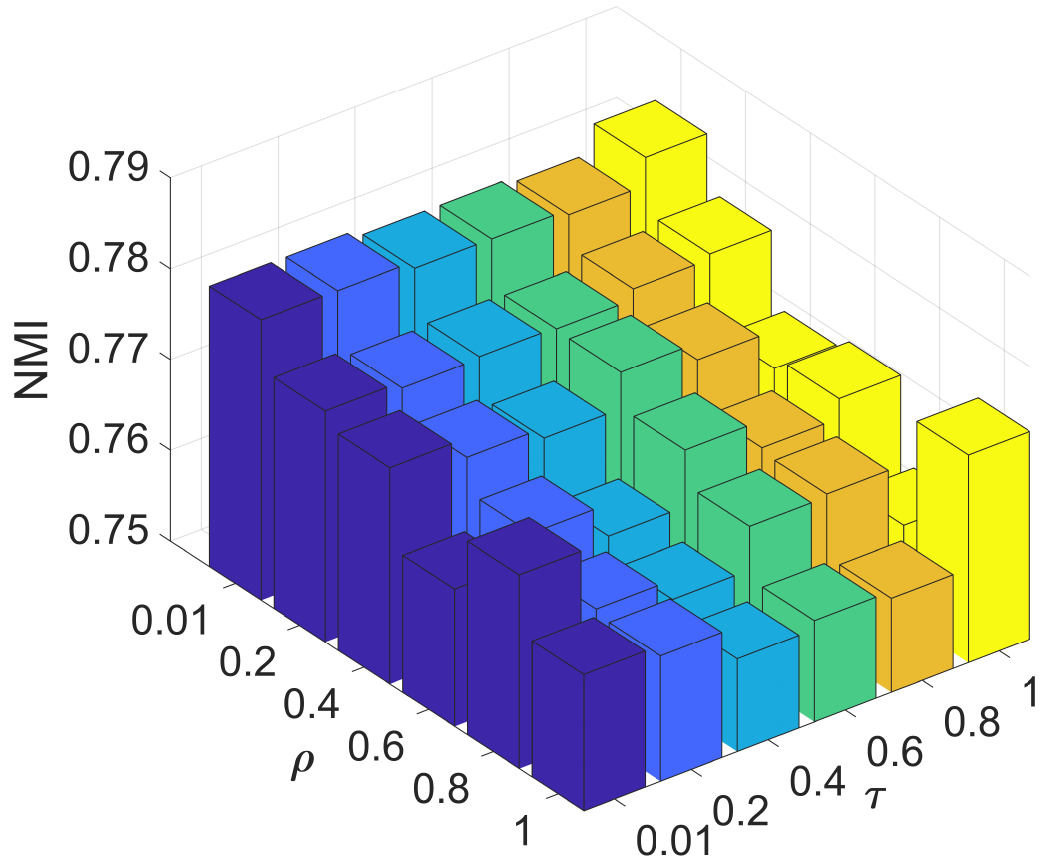}}
    \subfloat[ACC on ACM.]{
        \includegraphics[width=1.6in, height=1.5in]{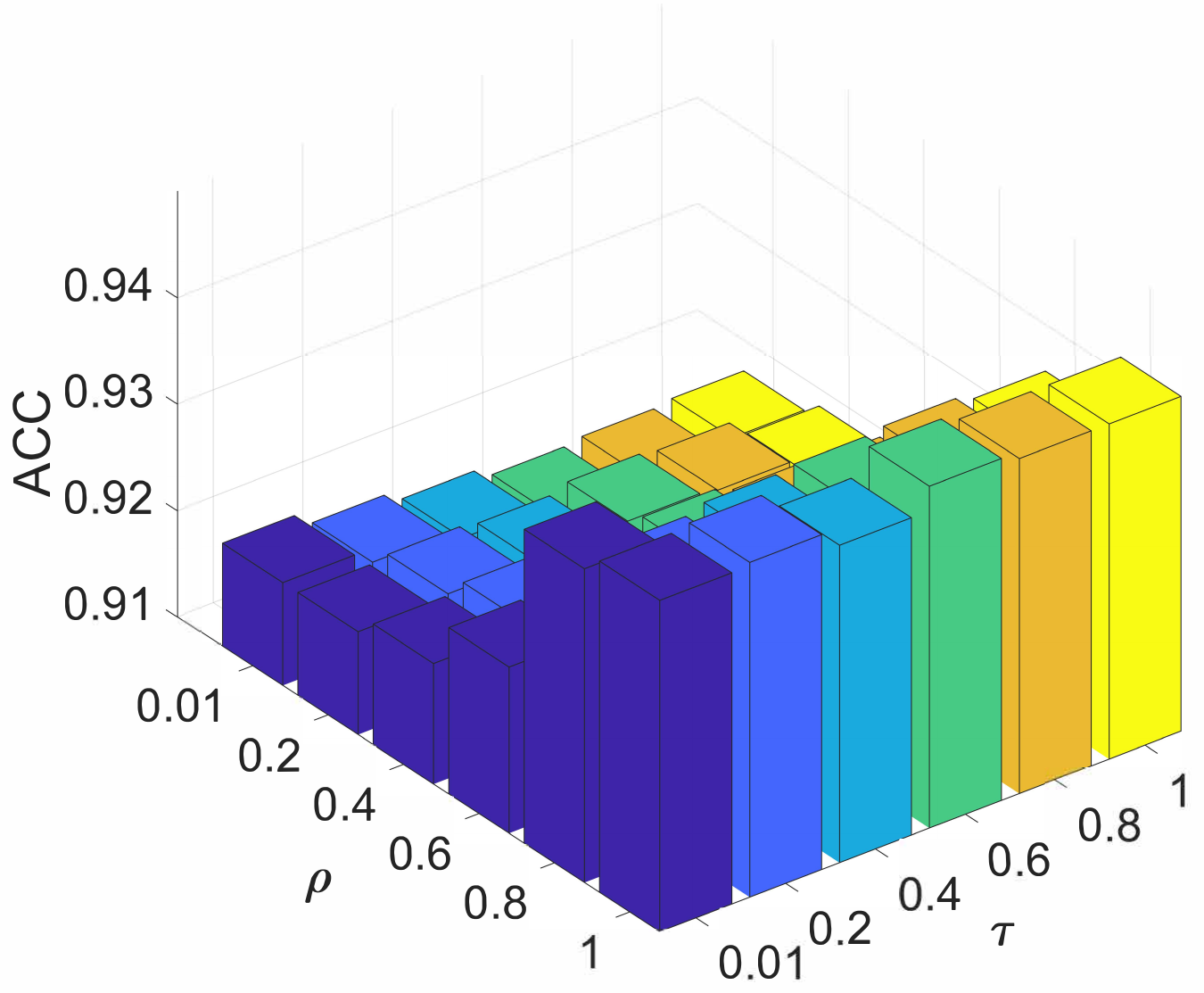}}
    \subfloat[ACC on DBLP.]{
        \includegraphics[width=1.6in, height=1.5in]{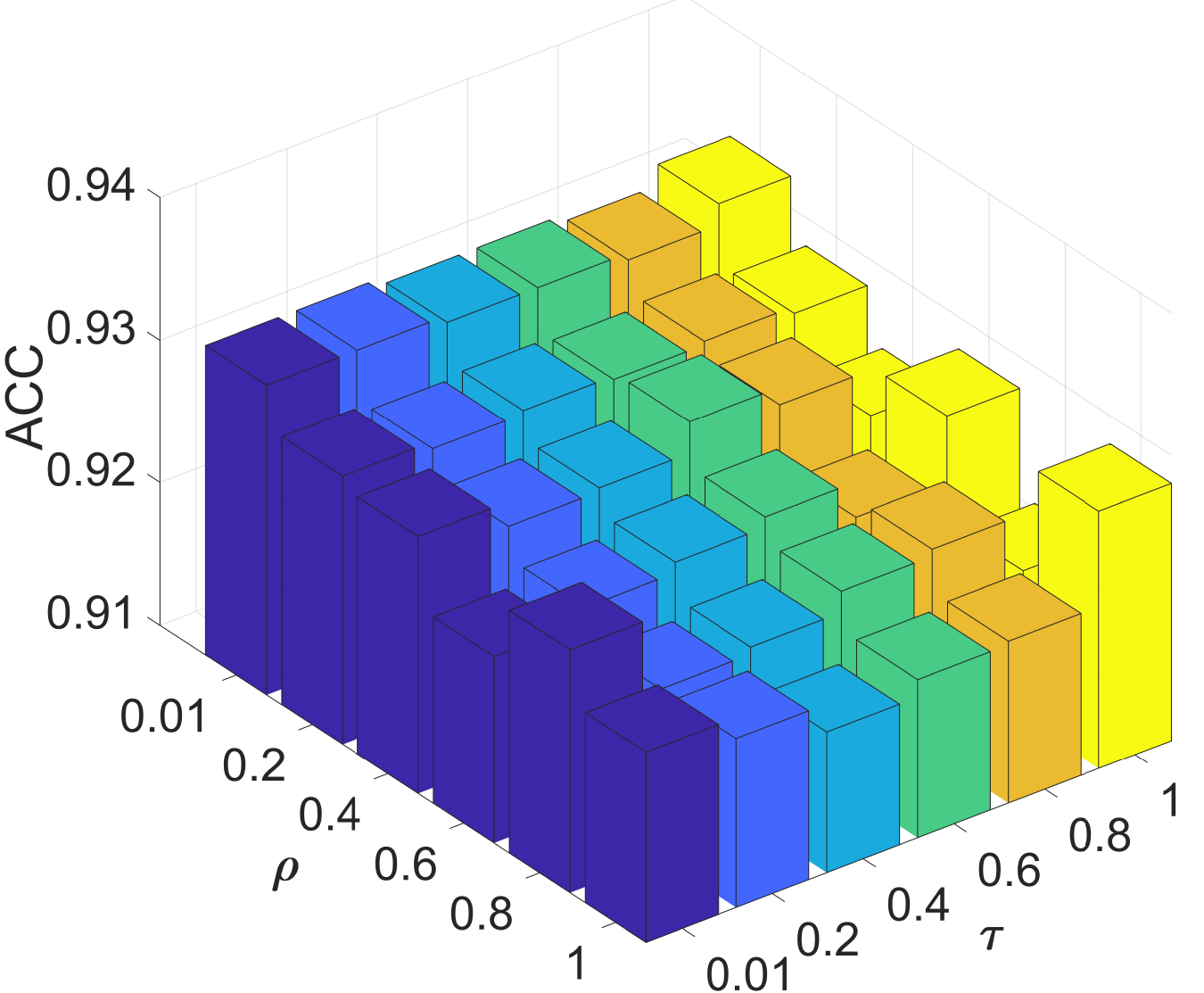}}
    \caption{Sensitive analysis with $\rho$ and $\tau$ on ACM and DBLP.}
    \label{fig:sensitive}
\end{figure*}
\begin{figure*}[t]
    \centering
        \includegraphics[width=\linewidth]{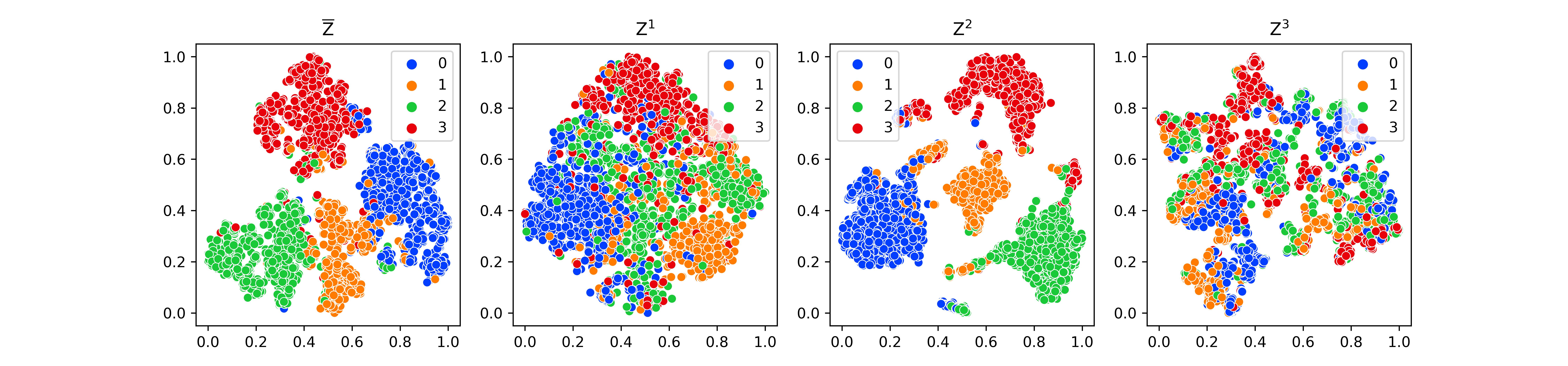}
    \caption{The visualization of $t$-SNE on each view's latent representation of DBLP dataset. $\mathbf{\overline{Z}}$ denotes the multi-view fused latent representation, and the rest denotes other views' representation of the dataset.}
    \label{fig:visview}
\end{figure*}
\subsubsection{Parameter Sensitive Analysis}
Figure~\ref{fig:sensitive} depicts how much the parameter $\rho$ and $\tau$ influence VGMGC.
In general, $\rho$ is more influential than $\tau$, and the influences of both $\rho$ and $\tau$ are minor to VGMGC. The stable performance regarding the two parameters is because they can only sharpen or smooth, rather than determine, the positive or negative influences from multiple views, and these influences are also adaptively adjusted by VGMGC itself. 
More specifically, Eq.~\eqref{eqBelief} shows that the belief of each view is determined by the adaptively adjusted scores instead of hyper-parameter $\rho$. Similarly, Eq.~\eqref{eqRepara} indicates that the temperature parameter $\tau$ cannot determine the final distribution of the generated graph. 
It is also notable that the ACC performs stable on the two datasets with the variation of ACC within 3\%. Looking into the results of NMI, VGMGC has a more stable performance on DBLP. Specifically, it can be seen that NMI trends to increase on ACM but decrease on DBLP as the hyper-parameter $\rho$ increases. We could speculate that the information from different views may be harmful on ACM but more compatible and helpful on DBLP. This is because the $\rho$ influences how much the bad view affects our model, as discussed in Section~\ref{SecMVFusion}. VGMGC, on the other hand, behaves relatively stable for different values of $\tau$ which impacts the distribution of $s_{ij}$.

Furthermore, we explore a wider range of $\tau$ to observe its influence on the distribution of the generated graph, which is represented by the mean and variance of $s_{ij}$. According to Table~\ref{tab:tau},  as $\tau$ increases, the variance of $s_{ij}$ decreases and its mean value gradually approaches 50\%.  This occurs because increasing $\tau$ introduces more noise from a uniform distribution ($Uniform(0,1)$) into $s_{ij}$, which consistently aligns with the theory.

\begin{table}[t]
    \centering
    \caption{Experiments of larger $\tau$ on ACM. The last two columns denote the mean and variance of $s_{ij}$ computed from training VGMGC with 200 epochs.}
    \begin{tabular}{l|llll}
    \toprule[1.5pt]
    Temperature & NMI\% & ACC\% & Mean (1e-2) & Variance (1e-2)\\
    \midrule
    $\tau=0.1$  &  77.4 & 93.8 &	50.46	& 22.5\cr
    $\tau=1$    &  77.5 & 93.9 &	50.297	& 8.337 \cr
    $\tau=2$    &  77.7 & 94.0 &	50.192	& 3.542 \cr
    $\boldsymbol{\tau=5}$    &  \textbf{77.7} & \textbf{94.1} & 50.091	& 0.758 \cr
    $\tau=10$   &  77.7 & 94.0 &	50.046	& 0.201 \cr
    $\tau=50$   &  77.6 & 94.0 &	50.009	& 8.217 (1e-5) \cr
    $\tau=100$  &  77.2 & 93.9 &	50.0004	& 2.056 (1e-5) \cr
    \bottomrule[1.5pt]
    \end{tabular}
\label{tab:tau}
\end{table}
\subsubsection{Visualization of Each View}\label{secVisualize}
Figure~\ref{fig:visview} visualizes the embedding learned by VGMGC for each view, providing a more intuitive understanding of the outstanding clustering performance of the proposed VGMGC.
Obviously, the third view $\mathbf{Z}^{\it3}$ in DBLP performs poorly, whereas $\mathbf{Z}^{\it2}$ in DBLP performs relatively well. It is worth noting that the generated variational fused view $\mathbf{\overline{Z}}$ can differentiate some ambiguous points in the second view $\mathbf{Z}^{\it2}$. The fused view manages to have an impressive performance by effectively mining enough view-specific and view-common information from all views. This finding corroborates the results presented in Section~\ref{Seceachview}.

\section{Conclusions}
\label{secCon}
In this paper, we proposed a multi-view graph clustering method to better mine and embed the consensus topological information of all graphs while reserving useful feature information and extracting some task-relevant information, which is called variational graph generator for multi-view graph clustering (VGMGC). We also theoretically proved its effectiveness from information theories. Our experiments showed that VGMGC outperforms previous MGC works on eight datasets.
In the future, we expect our work could inspire more research, such as graph generation and multi-graph learning.
A limitation of the proposed method is that it is limited to the clustering task. For other tasks, it needs to change the score function to compute the belief of each graph.

\section*{Acknowledgements} 
This work is supported in part by National Natural Science Foundation of China (No. 62476052), Sichuan Science and Technology Program (No. 2024NSFSC1473), and Shenzhen Science and Technology Program (No. JCYJ20230807115959041). Philip S. Yu is supported in part by NSF under grant III-2106758. Lifang He is partially supported by the NSF grants (MRI-2215789, IIS-1909879, IIS-2319451), NIH grant under R21EY034179, and Lehigh’s grants under Accelerator and CORE.

\begin{small}
\bibliographystyle{IEEEtranN}
\bibliography{main_arxiv}
\end{small}

\appendix

\subsection{Overall Notations}
\label{appNotation}
The explanations of all notations are shown in Table~\ref{tab:my_label}.
\begin{table}[!t]
    \centering
    \caption{Overall notations.}
    \begin{tabular}{cl}
    \toprule
        $\mathbf{A}^v$ & $\mathbf{A}^v \in \mathbb{R}^{n \times n}$, $v$-th view's adjacent matrix with self-loop.\\
        \midrule
        $\mathbf{D}^v$ & $\mathbf{D}^v_{ii}=\sum_j a^v_{ij}$, degree matrix of $\mathbf{A}^v$.\\
        \midrule
        $\mathbf{\widetilde{A}}^v$ & $\mathbf{\widetilde{A}}^v=(\mathbf{D}^v)^{-1}\mathbf{A}^v$, normalized adjacent matrix.\\
        \midrule
        $\mathbf{\breve{A}}^v$ & Reconstructed graph in generative process (Eq.~\eqref{eqGen}).\\
        \midrule
        $\mathbf{X}^v$ & $\mathbf{X}^v \in \mathbb{R}^{n\times d_v}$, $v$-th view's attributed features.\\
        \midrule
        $\mathbf{\breve{X}}^v$ & Reconstructed features to calculate $\mathcal{L}_r$.\\
        \midrule
        $\mathbf{\overline{X}}$ & $\mathbf{\overline{X}} \in \mathbb{R}^{n \times d'}$, global features from all views' features.\\
        \midrule
        $\mathcal{G}^v$ &  $\mathcal{G}^v=(\mathbf{X}^v,\mathbf{A}^v)$, $v$-th view's graph.\\
        \midrule
        $\mathbf{Z}^v$ & $\mathbf{Z}^v \in \mathbb{R}^{n \times D}$, $v$-th view's graph embedding.\\
        \midrule
        $\mathbf{\overline{Z}}$ &$\mathbf{\overline{Z}} \in \mathbb{R}^{n \times (V \cdot D)}$, final global embedding from $\{\mathbf{Z}^v\}^V_{v=1}$.\\
        \midrule
        $\boldsymbol{\alpha}$ & $\boldsymbol{\alpha} \in \mathbb{R}^{n\times n}$ are the neurons used to obtain $\mathbf{S}$.\\
        \midrule
        $\mathbf{S}$ & $\mathbf{S}=\{s_{ij}\mid s_{ij} \thicksim BinConcrete(\alpha_{ij}, \tau)\}$.\\
        \midrule
        $\mathbf{\hat{S}}$ & $\mathbf{\hat{S}}\in \mathbb{R}^{n\times n}$, the variational consensus graph.\\
        \midrule
        $b^v$ & The belief of $v$-th graph, calculated from Eq.~\eqref{eqBelief}.\\
        \midrule
        $\beta_{ij}$ & $\beta_{ij}\in (0,1]$, the parameter of Bernoulli distribution.\\
        \midrule
        $U$ & Sampled from Uniform distribution, $U \thicksim Uniform(0,1)$.\\
        \midrule
        $\phi'$ & The learnable parameters in $q_{\phi'}(\mathbf{\hat{S}}\mid\mathbf{\overline{X}})$.\\
        \midrule
        $\eta'$ & The learnable parameters in $q_{\eta'}(\boldsymbol{\alpha} \mid \mathbf{\overline{X}})$.\\
        \midrule
        $\eta^v$ & The learnable parameters in $q_{\eta^v}(\mathbf{X}^v \mid \mathbf{Z}^v)$.\\
        \midrule
        $\phi^v$ & The learnable parameters of $f_v$. \\
        \midrule
        $\rho$ & $\rho \geq 0$ is a hyper-parameter for belief $b^v$ in Eq.~\eqref{eqBelief}.\\
        \midrule
        $\tau$ & $\tau > 0$ is a hyper-parameter of temperature in Eq.~\eqref{eqRepara}.\\
        \midrule
        $order$ & A hyper-parameter of aggregation orders in Eq.~\eqref{eqMessagePassing}.\\
        \midrule
        $\mathcal{L}_c$  & Task related loss for optimizing clustering results.\\
        \midrule
        $\mathcal{L}_r$  & Reconstruction loss for optimizing mutual information.\\
        \midrule
        $\mathcal{L}_{E}$ & ELBO for optimizing variational graph generator.\\
        \midrule
        $\gamma_c$  & A hyper-parameter to trade off the numerical value of $\mathcal{L}_c$.\\
        \midrule
        $\gamma_E$  & A hyper-parameter to trade off the numerical value of $\mathcal{L}_E$.\\
    \bottomrule
    \end{tabular}
    \label{tab:my_label}
\end{table}

\subsection{Implementation Details}
\label{appImpleDetail}
\subsubsection{Experimental Environments}
The experiments are conducted on a CentOS machine with a NVIDIA Tesla V100 SXM2 GPU and Cascade LakeP82 v6@2.4GHz CPU. CUDA version is 10.2 and PyTorch version is 1.11.0. 
\subsubsection{Details of Global and Specific Graph Encoder}
Similar to the MLP $f'$, $f_v$ also has three layers, the dimensions of $hidden$ and $output$ layers are chosen to be $512$. The parameters $\phi^v$ are initialized with Kaiming, and \emph{ReLu} is selected as the activation function.

\subsubsection{Details of Inference Process}
The variational consensus graph $\mathbf{\hat{S}}$ is generated through a $f'$ from global features $\mathbf{\overline{X}}$. The instantiation of $f'$ is an MLP, playing a role of encoder to extract the information from $\mathbf{\overline{X}}$. Specifically, $f'$ consists of three layers, \emph{i.e.}, $input \rightarrow hidden \rightarrow output$, and each layer follows a Dropout operation. In our model, we set the dimensions of both $hidden$ and $output$ layers as $512$, drop rate of Dropout operation as $0.1$ and all parameters of $\phi'$ are initialized with Xavier.

\subsection{Details of Loss Functions}
\label{appLossFunction}
\subsubsection{Evidence Lower Bound Loss}
In the generative process, the decoder of $p_\xi^v(\mathbf{A}^v|\mathbf{\hat{S}},\mathbf{X}^v)$ shares structure and parameters with $f_v$ and $\phi^v$. Finally, $\breve{\mathbf{A}}^v$ can be reconstructed through $\sigma(\mathbf{Z}^v(\mathbf{Z}^v)^T)$, where $\sigma(\cdot)$ denotes \emph{Sigmoid} function. Thus $\mathcal{L}_{E}$ can be written as:

\begin{equation}
    \mathcal{L}_{E} 
    = -\sum^V_{v=1} H(\mathbf{A}^v, \mathbf{\breve{A}}^v) + H(\mathbf{\hat{S}}) - \sum_{ij}\log{\frac{1}{\beta_{ij}}}.
\end{equation}
\subsubsection{Lower Bound of Mutual Information Loss}
For the reconstruction of $\mathbf{X}^v$, $\mathbf{Z}^v$ is fed into an MLP decoder, whose structure is the opposite of $f_v$. To reconstruct the global features $\mathbf{\overline{X}}$ to $\mathbf{\breve{\overline{X}}}$, the obtained $\mathbf{Q}$ is fed into an MLP decoder opposite of $f'$. Subsequently, the reconstruction losses $\mathcal{L}^v_r$ and $\mathcal{L}'_r$ for maximizing the lower bound of mutual information can be written as follows:
\begin{equation}
\begin{aligned}
    \mathcal{L}^v_r 
    &= \sum^V_{v=1} H(\mathbf{X}^v, \mathbf{\breve{X}}^v) \\
    &= -\sum^V_{v=1} \sum_{i}
    \left[\mathbf{x}_{i}^v \log \mathbf{\breve{x}}_{i}^v + (1-\mathbf{x}_{i}^v) \log (1-\mathbf{\breve{x}}_{i}^v) \right],
\end{aligned}
\end{equation}

\begin{equation}
\begin{aligned}
    \mathcal{L}'_r 
    &= H(\mathbf{\overline{X}},\mathbf{\breve{\overline{X}}})\\
    &= -\sum_{i} \left[\mathbf{\overline{x}}_{i} \log \mathbf{\breve{\overline{x}}}_{i} + (1-\mathbf{\overline{x}}_{i}) \log (1-\mathbf{\breve{\overline{x}}}_{i})\right].    
\end{aligned}
\end{equation}
Finally, we can obtain our overall reconstruction loss $\mathcal{L}_r$:
\begin{equation}
\mathcal{L}_r = \sum_v\mathcal{L}^v_r + \mathcal{L}'_r
\end{equation}

\subsubsection{Clustering Loss}
Clustering loss is specialized for clustering task~\cite{MAGCN,DEC,xu2021deep}, which encourages the assignment distribution of samples in a same cluster being more similar. Concretely, let $Q^v=\{q^v_{ij}\}$ be a soft assignment of $i$-th node to $j$-the cluster in $v$-th view, $\boldsymbol{\mu}^v \in \mathbb{R}^{c \times D}$ be $c$ centroids of $c$ clusters, $q^v_{ij}$ is calculated by Student's $t$-distribution~\cite{tsne}:
\begin{equation}
    q^v_{ij} = \frac{(1+\Vert \mathbf{z}^v_i-\boldsymbol{\mu}^v_j\Vert^2)^{-1}}{\sum_j(1+\Vert\mathbf{z}^v_i-\boldsymbol{\mu}^v_j\Vert^2)^{-1}}.
\end{equation}
By sharpening the soft assignment $q^v_{ij}$, we can obtain its target distribution $P^v=\{p^v_{ij}\}$:
\begin{equation}
    p^v_{ij}=\frac{(q^v_{ij})^2 / \sum_iq^v_{ij}}{\sum_j ((q^v_{ij})^2 / \sum_iq^v_{ij})}.
\end{equation}
The clustering loss encourages soft assignment distribution $Q^v$ to fit target distribution $P^v$ by KL divergence, \emph{i.e.}, $KL(P^v \Vert Q^v)$. In our multi-view clustering task, we assume that every view's soft distribution could fit the most correct target distribution, \emph{i.e.}, the target distribution of global representation $\mathbf{\overline{Z}}$. Using $\overline{P}$ and $\overline{Q}$ denote the target and soft distribution of $\mathbf{\overline{Z}}$, the multi-view clustering loss could be improved as:
\begin{equation}
    \mathcal{L}_{c} = \sum^V_{v=1} KL(\overline{P} \Vert Q^v) + KL(\overline{P} \Vert \overline{Q}).
\end{equation}


 




\end{document}